\pdfoutput=1
\documentclass[11pt]{article}

\usepackage[preprint]{acl}
\usepackage{microtype}

\usepackage[T1]{fontenc}
\usepackage[utf8]{inputenc}
\usepackage{tgtermes}

\usepackage{amsmath,amssymb,amsthm,mathtools}
\usepackage{booktabs,tabularx,array,multirow}
\usepackage{graphicx}
\usepackage{xcolor}
\usepackage{enumitem}
\graphicspath{{fig/}}

\theoremstyle{plain}
\newtheorem{theorem}{Theorem}
\newtheorem{proposition}{Proposition}
\newtheorem{corollary}{Corollary}
\newtheorem{lemma}{Lemma}
\theoremstyle{definition}

\theoremstyle{remark}

\newcommand{\KL}{\mathrm{KL}}
\newcommand{\TV}{\mathrm{TV}}
\newcommand{\E}{\mathbb{E}}

\newcommand{\Dstep}{D^{2}_{S}}
\newcommand{\Dtraj}{D^{2}_{\tau}}

\newcommand{\base}{\mathrm{base}}
\newcommand{\Tmax}{T_{\max}}
\DeclarePairedDelimiter{\abs}{\lvert}{\rvert}
\DeclarePairedDelimiter{\norm}{\lVert}{\rVert}

\title{Agent Behavior Maps: Locating Agents by What They Would Do,\\ Without Running Them}

\author{Shota Fujikawa\thanks{\ Equal contribution.}\thanks{\ Work done during an internship at NEC Corporation.} \\
  The University of Tokyo \\ \texttt{fjkwsht@is.s.u-tokyo.ac.jp} \And
  Takuya Tamura$^{*}$ \\
  NEC Corporation \\ \texttt{tamura-takuya@nec.com} \AND
  Kosuke Akimoto \\
  NEC Corporation \\ \texttt{kosuke\_a@nec.com} \And
  Daichi Haraguchi \\
  NEC Corporation \\ \texttt{daichi-haraguchi@nec.com} \And
  Masafumi Oyamada \\
  NEC Corporation \\ \texttt{oyamada@nec.com}}

\begin{document}
\maketitle

\begin{abstract}
LLM agents with similar success rates can follow different action sequences. An Agent Behavior Map represents LLM--harness combinations using centered action log-likelihoods on shared reference trajectories, without requiring new environment rollouts from the target agent. We derive error bounds relating squared coordinate distances to trajectory KL under explicit assumptions, and across eight cross-scored models the distances track the directly measured trajectory KL in rank order. Across 100 LLMs and four environments, ALFWorld-based coordinates support performance prediction and identify behavioral similarities. In ALFWorld, these similarities persist after controlling for success rates and model size, including across model families. As an extension, harness-component displacements approximately compose, with accuracy varying across components and environments.
\end{abstract}

\section{Introduction}
\label{sec:intro}

\begin{figure*}[t]
  \centering
  \includegraphics[width=\textwidth]{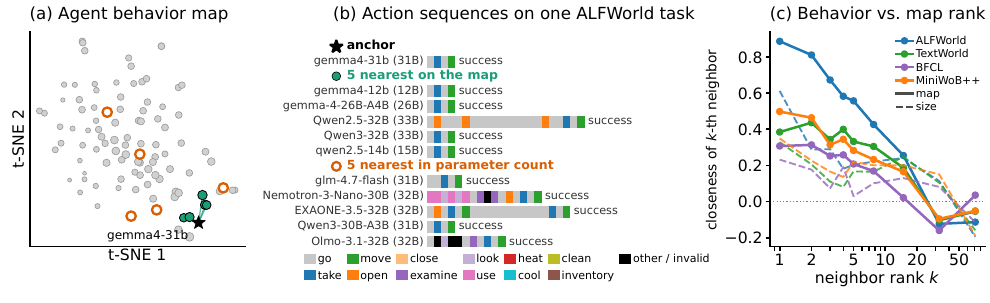}
  \caption{\textbf{Agents close on the map act alike when run.} (a) 100 LLMs with the BASE harness, scored on 500 ALFWorld reference trajectories (t-SNE layout; neighbors computed in the original space). $\star$: gemma4-31b; green: its five nearest models on the map; orange rings: the five closest in parameter count. (b) Action sequences of these 11 models on one ALFWorld task. All succeed, but four of the five map neighbors share the anchor's short route, whereas three of the five size-matched models take longer ones. (c) Effect size of the behavioral distance to the \mbox{$k$-th} nearest model on the map (solid) or in parameter count (dashed), averaged over four aspects and 100 models (\S\ref{sec:behaviour}); positive means more similar than the average pair.}
  \label{fig:map}
\end{figure*}

Comparing LLM agents is a matter not only of how often they succeed but also of how they act. Agents with similar success rates can differ in the tools they choose, the order of their actions, and how often they repeat an operation. Such behavior depends on the combination of the underlying LLM and the harness that manages history, planning, and reasoning~\citep{zhang2026stopcomparing,yao2026harnessbench,vats2026scaffold,fan2026harnessdesign}. Running every combination in the environment reveals both performance and behavior, but each new agent to compare requires collecting new trajectories. This paper asks whether a representation that captures similarity of run-time behavior can be built without new environment runs of the target agent.

When agents are compared by their log-likelihoods on shared reference trajectories, the difficulty is that the distribution of the histories on which the likelihoods are conditioned itself depends on the agent's policy. The histories contained in the reference trajectories need not coincide with those that an agent reaches through its own actions. Whether similarity of action probabilities measured on shared histories corresponds to similarity of trajectory distributions and behavior at run time is therefore not obvious. How much of these on-policy properties can be captured by off-policy observations on fixed reference trajectories is the central question of this work.

Language model maps, which vectorize log-likelihoods on shared texts, represent the relations between models in a common coordinate space~\citep{oyama2025mapping}. Their extension to prompt--response distributions has examined performance prediction from conditional log-likelihoods and the approximate additivity of the coordinate displacements caused by prompt changes~\citep{takase2026maps}. This paper applies the framework to agents that interact sequentially with an environment and answers the question above with theory and experiments.

An Agent Behavior Map treats each agent as a pair of an LLM and a harness. Each agent rebuilds its input with its own harness from the shared external history in the reference trajectories, namely the task, the observations, and the past actions. The recorded actions are teacher-forced under this input, and the resulting log-likelihood vector is centered to give the agent's coordinates. Because the reference trajectories are reused, the coordinates of a new agent can be computed without collecting any additional environment trajectories from it. Figure~\ref{fig:map} shows an example in which agents that are close in these coordinates take similar action sequences when run. We validate the representation from two angles.
\begin{itemize}[leftmargin=*,itemsep=1pt,topsep=2pt]
  \item \textbf{Theoretical and empirical relation to the trajectory distribution} (\S\ref{sec:method}, \S\ref{sec:klfit}). Under assumptions on the locality of the two policies and on the mismatch with the reference distribution, we derive error bounds that relate squared coordinate distances to the KL divergence between trajectory distributions. The error decomposes into local approximation, reference mismatch, centering, and finite samples, and the decomposition shows how the step-level and trajectory-level representations differ in their dependence on trajectory length. In a cross-scoring experiment with eight models, we compare the coordinate distances with the directly measured trajectory KL, the target itself, and find a high rank correlation.
  \item \textbf{Validation on performance and behavior} (\S\ref{sec:predicting}, \S\ref{sec:behaviour}). We apply a common harness to 100 LLMs and evaluate coordinates built from ALFWorld reference trajectories alone on ALFWorld, TextWorld, BFCL, and MiniWoB++. Besides predicting the success rates of unevaluated models from the evaluation results of the others, we ask whether neighbors on the map generate similar action sequences. In ALFWorld in particular, the similarity of action sequences remains when compared with neighbors chosen by success rate or by size, and when the candidates are restricted to models of other families. These results support the view that the coordinates carry behavioral information that neither performance level nor model family explains. The accuracy of performance prediction and the strength of behavioral similarity among neighbors, however, depend on the environment and the measure.
\end{itemize}
Finally, we represent the addition of a harness component as a displacement on the map and ask whether such displacements add up within an LLM (\S\ref{sec:harness}). Comparison with measured coordinates shows approximate additivity, with an accuracy that depends on the component and the environment.

\section{Agent Behavior Maps}
\label{sec:method}
\subsection{Problem: comparing agents by what they do when run}
\label{sec:targets}
On task $x$, an agent interacts with the environment and produces a trajectory $\tau=(x,o_0,a_0,o_1,\dots,a_{T-1},o_T)$ of observations $o_t$ and actions $a_t$. Its length $T=T(\tau)$ varies and is capped by a known limit $\Tmax$. The \textbf{external history} before the action at time $t$ is $h_t=(x,o_0,a_0,\dots,o_t)$; it excludes the generator's internal state, such as its system prompt, reasoning, plans, and summaries. An agent $i=(\text{LLM},\text{harness})$ rebuilds its input from $h_t$ by its own rule $C_i$ (history window, retrieval, summarization) and generates text under the action grammar; $\pi_i(a\mid h)$ denotes the resulting probability of action $a$ (Appendix~\ref{app:probability} covers the grammar and tokenization). Given a task distribution $w$, an initial observation $p_0$, and environment dynamics $E_t=E(o_{t+1}\mid h_t,a_t)$ shared by all agents, agent $i$ induces the trajectory distribution
\begin{equation}
P_i(\tau)=w(x)\,p_0(o_0\mid x)\prod_{t<T}\pi_i(a_t\mid h_t)\,E_t.
\label{eq:traj}
\end{equation}
The environment terms cancel in the log-likelihood ratio of two agents. We compare agents by the trajectory KL $K_{ij}:=\KL(P_i\Vert P_j)$ and its symmetrization $K_{\mathrm{sym}}:=(K_{ij}+K_{ji})/2$. $P_i$ is the distribution of trajectories when agent $i$ is actually run, so the properties we care about, such as success rates and action frequencies, are functionals of $P_i$: on-policy quantities.

Writing $\E_i=\E_{P_i}$, the chain rule of KL~\citep{cover2006elements,ke2019fdiv} gives
\begin{equation}
K_{ij}=\E_i\textstyle\sum_{t<T}\KL\bigl(\pi_i(\cdot\mid h_t)\Vert\pi_j(\cdot\mid h_t)\bigr),
\label{eq:chain}
\end{equation}
an expectation over the histories that agent $i$ \emph{visits when it runs}, so measuring it directly takes $n$ rollouts and $O(n^2)$ cross-scorings for $n$ agents. We ask how much of this quantity can be read, \emph{without running the agents}, from distances between coordinates obtained by forward passes alone.

\subsection{Scoring agents on shared reference trajectories}
\label{sec:reference}
We prepare $N$ \textbf{reference trajectories} $\tau_1,\dots,\tau_N$ once, from an unknown reference distribution $Q$. The distances use neither $Q(\tau)$ nor the generators or their mixing weights, so the trajectories can come from anywhere. We use four sources: logs of the agents under study (ALFWorld, Appendix~\ref{app:harness}), traces published by others ($\tau$-bench~\citep{yao2025taubench} and SWE-Bench~\citep{jimenez2024swebench} from HAL~\citep{kapoor2025hal}), rollouts of two models (the 100-model experiment, \S\ref{sec:mapping}), and an equal-weight pool of eight scored models (\S\ref{sec:klfit}). Let $T_r=T(\tau_r)$ be the lengths and $M=\sum_rT_r$ the total number of steps. For each agent $i$, teacher forcing~\citep{williams1989teacher} reads $\ell_{i,rt}:=\log\pi_i(a_{rt}\mid h_{rt})$ and $A_{i,r}:=\sum_{t<T_r}\ell_{i,rt}$ without running the environment. Sharing the trajectories has two costs, the mismatch between $Q$ and each agent's on-policy distribution and the finite number of trajectories, which \S\ref{sec:estimator} and \S\ref{sec:finite} address.

\subsection{Step and trajectory embeddings}
\label{sec:vectors}
\label{sec:estimator}
Let $\bar\ell_i$ and $\bar A_i$ be the means of $\ell_{i,rt}$ over all $M$ steps and of $A_{i,r}$ over all $N$ trajectories. Subtracting \emph{each agent's own mean}, we define two centered embeddings
\begin{equation*}
z^{S}_i:=M^{-\frac12}(\ell_{i,rt}-\bar\ell_i)_{r,t},\,
z^{\tau}_i:=N^{-\frac12}(A_{i,r}-\bar A_i)_{r}.
\end{equation*}
A step coordinate asks whether an action is natural in its situation; a trajectory coordinate asks whether a trajectory is natural from its initial state. Only existing steps $t<T_r$ are included. The distances are $\hat D_S=\norm{z^S_i-z^S_j}$ and $\hat D_\tau=\norm{z^\tau_i-z^\tau_j}$. For a pair $(i,j)$, let $d_t=\log\frac{\pi_i(a_t\mid h_t)}{\pi_j(a_t\mid h_t)}$ be the step log-ratio, $X=\sum_{t<T}d_t=\log\frac{P_i(\tau)}{P_j(\tau)}$ its sum, $\mu=\E_QX$ its mean under the reference, and $\bar T=\E_Q[T]$ the mean length. The population distances are then $\Dstep=\frac1{\bar T}\E_Q\sum_{t<T}(d_t-\mu/\bar T)^2$ and $\Dtraj=\mathrm{Var}_Q(X)$. Centering removes each agent's overall confidence, its mean log-likelihood, from the coordinates (Appendices~\ref{app:uncentered} and \ref{app:centering} treat the uncentered variants).

\paragraph{Distances estimate the trajectory KL.}
Fix a pair $(i,j)$. \emph{Assumption~1 (locality):} for every history $h$ that occurs under $P_i$, $P_j$, or $Q$, $\pi_i(\cdot\mid h)$ and $\pi_j(\cdot\mid h)$ have the same support and $\abs{\log\pi_i(a\mid h)/\pi_j(a\mid h)}\le\varepsilon$ on it. \emph{Assumption~2 (reference mismatch):} $\TV(P_i,Q)\le\lambda$ and $\TV(P_j,Q)\le\lambda$. Lengths satisfy $1\le T\le\Tmax$; we assume neither independence across steps nor independence between $T$ and the actions.
\begin{theorem}[Approximation error]
\label{thm:approx}
Under Assumptions~1 and~2, $\abs{K_{ij}-\tfrac{\bar T}{2}\Dstep}\le R_S$ and $\abs{K_{ij}-\tfrac12\Dtraj}\le R_\tau$, where, up to absolute constants,
\begin{align}
R_S&=O\bigl(\varepsilon^3\E_i[T]+\lambda\Tmax\varepsilon^2+\mu^2/\bar T\bigr),\label{eq:RS}\\
R_\tau&=O\bigl(\varepsilon^3\E_i[T^2]+\lambda\Tmax^2\varepsilon^2+\mu^2\bigr).\label{eq:Rt}
\end{align}
Analogous bounds hold for $K_{\mathrm{sym}}$ (Appendix~\ref{app:proofs}).
\end{theorem}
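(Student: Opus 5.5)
The plan has three stages. First, compare the step-level KL with the second moment of the log-ratio under agent $i$'s own policy. Second, transport that comparison from $P_i$ to the reference $Q$ using Assumption~2. Third, account for the centering by $\mu$.

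\emph{Step 1 (local expansion of a single step).} Fix a history $h$ and write $d(a)=\log\pi_i(a\mid h)/\pi_j(a\mid h)$, so that $\abs{d}\le\varepsilon$ by Assumption~1. Since $\sum_a\pi_j=\sum_a\pi_i e^{-d}=1$, Taylor expansion of $e^{-d}$ gives
\[
0=\E_{\pi_i}\bigl[-d+\tfrac12d^2\bigr]+O(\varepsilon^3),
\qquad\text{so}\qquad
\KL(\pi_i\Vert\pi_j)=\E_{\pi_i}d=\tfrac12\E_{\pi_i}d^2+O(\varepsilon^3).
\]
Summing over $t<T$ and taking $\E_i$ in the chain rule \eqref{eq:chain} yields $K_{ij}=\tfrac12\E_i\sum_{t<T}d_t^2+O(\varepsilon^3\E_i[T])$.

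For the trajectory version I would use the martingale structure of $X$ instead. Let $m_t=\KL(\pi_i\Vert\pi_j)(h_t)=O(\varepsilon^2)$ and $\xi_t=d_t-m_t$, which are martingale differences under $P_i$. Then
\[
\E_iX^2=\E_i\Bigl(\sum_t\xi_t\Bigr)^2+2\,\E_i\Bigl[\sum_t\xi_t\sum_s m_s\Bigr]+\E_i\Bigl(\sum_t m_t\Bigr)^2 .
\]
The first term equals $\E_i\sum_t\xi_t^2=\E_i\sum_t d_t^2+O(\varepsilon^4\E_iT)$ by orthogonality of martingale increments; the optional stopping needed here is justified because $T\le\Tmax$. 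The cross term is $O(\varepsilon^3\E_i[T^2])$, since each $\abs{\xi_t}\le2\varepsilon$ and $\abs{m_s}=O(\varepsilon^2)$. The last term is $O(\varepsilon^4\E_i[T^2])$. Hence $K_{ij}=\tfrac12\E_iX^2+O(\varepsilon^3\E_i[T^2])$.

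\emph{Step 2 (change of measure).} The functionals $F_S=\sum_{t<T}d_t^2$ and $F_\tau=X^2$ are bounded by $\Tmax\varepsilon^2$ and $\Tmax^2\varepsilon^2$ respectively. Assumption~2 therefore gives $\abs{\E_iF-\E_QF}\le2\lambda\norm{F}_\infty$, which produces the terms $\lambda\Tmax\varepsilon^2$ and $\lambda\Tmax^2\varepsilon^2$.

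\emph{Step 3 (centering).} On the trajectory side, $\Dtraj=\E_QX^2-\mu^2$, which contributes the $\mu^2$ term. On the step side, expanding the square in the definition gives
\[
\bar T\,\Dstep=\E_Q\sum_{t<T}d_t^2-2\frac{\mu}{\bar T}\E_Q\sum_{t<T}d_t+\frac{\mu^2}{\bar T^2}\E_Q T.
\]
Since $\E_Q\sum_{t<T}d_t=\mu$ and $\E_QT=\bar T$, the last two terms combine to $-\mu^2/\bar T$. Thus $\tfrac{\bar T}2\Dstep=\tfrac12\E_Q\sum_{t<T}d_t^2-\mu^2/(2\bar T)$ exactly.

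Collecting the three steps gives $R_S$ and $R_\tau$. For $K_{\mathrm{sym}}$, I would run the same argument with $i\leftrightarrow j$ and average. The roles of $P_i$ and $P_j$ are symmetric in Assumption~2, so the only change is that $\E_i[T]$ is replaced by $\max(\E_i[T],\E_j[T])$.

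The main obstacle is the trajectory-level Step 1. There, $T$ may depend on the actions and the steps are not independent, so the expansion must rest on the martingale decomposition and optional stopping rather than on an independence argument. The care goes into showing that the cross terms between the drifts $m_t$ and the increments $\xi_t$ contribute only $O(\varepsilon^3\E_i[T^2])$ rather than something larger.
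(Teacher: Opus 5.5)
Your proposal is correct and follows the paper's proof essentially step for step: the per-step quadratic expansion fed into the chain rule, a total-variation change of measure for the bounded functionals $\sum_{t<T}d_t^2$ and $X^2$, and the exact centering identities $\bar T\Dstep=\E_Q\sum_{t<T}d_t^2-\mu^2/\bar T$ and $\Dtraj=\E_QX^2-\mu^2$. Your Doob decomposition $d_t=\xi_t+m_t$ is only a repackaging of the paper's direct expansion $X^2=\sum_{t<T}d_t^2+2\sum_{s<t}d_sd_t$ with $\E_i[d_sd_t]=\E_i[d_sm_t]$, because your three terms recombine exactly into $\E_iX^2-\E_i\sum_{t<T}d_t^2=2\sum_{s<t}\E_i[d_sm_t]$.

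The one thing to tighten is the constants. A uniform Taylor remainder carries a factor $e^{\varepsilon}$, and your $m_t^2$ terms are $O(\varepsilon^4)$. So to claim absolute constants without assuming $\varepsilon\le1$, you should do one of two things. You can use the paper's sharper lemma: splitting by the sign of $d$ bounds the remainder by $\varepsilon^3/6$, and a log-moment-generating-function argument gives $0\le m_t\le\varepsilon^2/2$, hence $m_t^2\le\varepsilon^3/2$ since also $m_t\le\varepsilon$. Alternatively, dispose of $\varepsilon\ge1$ separately via the trivial bounds $0\le m_t\le\varepsilon\le\varepsilon^2$ and $\abs{m_t-\tfrac12\E_i[d_t^2\mid h_t]}\le\varepsilon^3$.
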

The factor $\Tmax$ on the $\lambda$ term, i.e., state-distribution mismatch accumulating over steps, is an error source that LM maps do not have (constants in Table~\ref{tab:bounds}, proofs in Appendix~\ref{app:proofs}).

\begin{table}[t]
  \centering\small
  \caption{Error bounds by source, with constants: approximation error (Theorem~\ref{thm:approx}, top three rows) and sampling error (Corollary~\ref{thm:sample}, bottom row, which defines $u_N$). The trajectory bound is about $T$ times the step bound for every source.}
  \label{tab:bounds}
  \setlength{\tabcolsep}{3pt}
  \renewcommand{\arraystretch}{1.2}
  \begin{tabular}{@{}llll@{}}
    \toprule
    Source & Step & Trajectory & Ratio \\
    \midrule
    Local expansion & $\tfrac16\varepsilon^3\E_i[T]$ & $\tfrac14\varepsilon^3\E_i[T^2]$ & $\tfrac{3\E_i[T^2]}{2\E_i[T]}$ \\
    Ref.\ mismatch & $\tfrac12\lambda\Tmax\varepsilon^2$ & $\tfrac12\lambda\Tmax^2\varepsilon^2$ & $\Tmax$ \\
    Centering & $\mu^2/(2\bar T)$ & $\mu^2/2$ & $\bar T$ \\
    \midrule
    Sampling & $\tfrac{6\Tmax-1}{2}\varepsilon^2u_N$ & $\tfrac52\Tmax^2\varepsilon^2u_N$ & $\approx\tfrac56\Tmax$ \\
    \bottomrule
  \end{tabular}
\end{table}

\subsection{Total error: approximation plus sampling}
\label{sec:finite}
Theorem~\ref{thm:approx} concerns the population distances. In practice, $\hat D_S$ and $\hat D_\tau$ are computed from $N$ reference trajectories, so the total error adds sampling error to the approximation error. Bounding $\E_i[T]\le\Tmax$ and $\E_i[T^2]\le\Tmax^2$ in Table~\ref{tab:bounds} gives the following.
\begin{corollary}[Total error]
\label{thm:sample}
Let $\tau_1,\dots,\tau_N\sim Q$ be i.i.d., and let $\hat{\bar T}:=M/N$, $\hat K_S:=\tfrac{\hat{\bar T}}{2}\hat D_S^2$, $\hat K_\tau:=\tfrac12\hat D_\tau^2$, and $u_N:=\sqrt{\log(8/\delta)/(2N)}$. Under Assumptions~1 and~2, with probability at least $1-\delta$, simultaneously
\begin{align*}
\abs{K_{ij}-\hat K_S}&\le\Tmax\,\varepsilon^2\bigl(\tfrac{\varepsilon}{4}+\tfrac{\lambda}{2}+3u_N\bigr)+\tfrac{\mu^2}{2\bar T},\\
\abs{K_{ij}-\hat K_\tau}&\le\Tmax^2\varepsilon^2\bigl(\tfrac{\varepsilon}{4}+\tfrac{\lambda}{2}+3u_N\bigr)+\tfrac{\mu^2}{2}.
\end{align*}
\end{corollary}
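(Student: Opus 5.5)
The plan is to split the total error into the approximation error of Theorem~\ref{thm:approx} plus a sampling error, $\abs{K_{ij}-\hat K}\le\abs{K_{ij}-K}+\abs{K-\hat K}$, where $K$ is the population quantity ($\tfrac{\bar T}{2}\Dstep$ or $\tfrac12\Dtraj$). The first term is bounded by Theorem~\ref{thm:approx} with the constants of Table~\ref{tab:bounds}. Using $\E_i[T]\le\Tmax$ and $\E_i[T^2]\le\Tmax^2$, the step bound becomes $\Tmax\varepsilon^2(\tfrac{\varepsilon}{6}+\tfrac{\lambda}{2})+\tfrac{\mu^2}{2\bar T}$ and the trajectory bound becomes $\Tmax^2\varepsilon^2(\tfrac{\varepsilon}{4}+\tfrac{\lambda}{2})+\tfrac{\mu^2}{2}$. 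Since $\tfrac16\le\tfrac14$, both fit the stated form $\varepsilon/4+\lambda/2$. It remains to show that the sampling term is at most $3\Tmax\varepsilon^2u_N$ (step) and $3\Tmax^2\varepsilon^2u_N$ (trajectory). The table gives $\tfrac{6\Tmax-1}{2}\le3\Tmax$ and $\tfrac52\le3$, so this is consistent.

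\textbf{Sampling term, trajectory case.} Write $\hat K_\tau=\tfrac12\bigl(\tfrac1N\sum_rX_r^2-(\tfrac1N\sum_rX_r)^2\bigr)$ with $X_r=\sum_{t<T_r}d_{rt}$ i.i.d.\ under $Q$. By Assumption~1, $\abs{X_r}\le\Tmax\varepsilon$, so $X_r^2\in[0,\Tmax^2\varepsilon^2]$ and $X_r\in[-\Tmax\varepsilon,\Tmax\varepsilon]$. Hoeffding's inequality gives, each with failure probability $\delta/4$:
\begin{itemize}
\item $\abs{\overline{X^2}-\E_QX^2}\le\Tmax^2\varepsilon^2u_N$;
\item $\abs{\bar X-\mu}\le2\Tmax\varepsilon u_N$.
\end{itemize}
The identity $\abs{\bar X^2-\mu^2}=\abs{\bar X-\mu}\,\abs{\bar X+\mu}\le2\Tmax\varepsilon\cdot2\Tmax\varepsilon u_N$ then yields a total of $5\Tmax^2\varepsilon^2u_N$. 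After the factor $\tfrac12$ this is $\tfrac52\Tmax^2\varepsilon^2u_N$, matching the table.

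\textbf{Sampling term, step case.} Here $\hat K_S=\tfrac{M/N}{2}\cdot\tfrac1M\sum_{r,t}(d_{rt}-\hat\mu_S)^2$, where $\hat\mu_S=\sum_{r,t}d_{rt}/M$. Expanding, this equals $\tfrac12\bigl(\tfrac1N\sum_rS_r-(\tfrac1N\sum_rX_r)^2/(M/N)\bigr)$ with $S_r=\sum_td_{rt}^2\in[0,\Tmax\varepsilon^2]$. The population analogue is $\tfrac12(\E_QS-\mu^2/\bar T)$. Steps within a trajectory are dependent, so all concentration must be at the trajectory level, via the i.i.d.\ sums $S_r$, $X_r$, $T_r$. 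I would apply Hoeffding to $S_r$, $X_r$ and $T_r\in[1,\Tmax]$, using the remaining failure budget; the $8$ in $u_N$ allows up to four two-sided events with a union bound.

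The main obstacle is the ratio term $\bar X^2/\hat{\bar T}$ versus $\mu^2/\bar T$. I would bound it as follows:
\begin{itemize}
\item the numerator perturbation is $\abs{\bar X^2-\mu^2}\le4\Tmax^2\varepsilon^2u_N$, and since $\hat{\bar T}\ge1$ this contributes at most a $\Tmax$-multiple;
\item the denominator perturbation is $\mu^2\abs{1/\hat{\bar T}-1/\bar T}\le\mu^2\abs{\hat{\bar T}-\bar T}/(\hat{\bar T}\bar T)$, and I would control it with $\mu^2/\bar T\le\Tmax\varepsilon^2\cdot\mu/\ldots$, using $\abs{\mu}\le\bar T\varepsilon$ so that $\mu^2/(\bar T\hat{\bar T})\le\varepsilon^2\bar T/\hat{\bar T}$.
\end{itemize}
Getting the constants to land exactly at $\tfrac{6\Tmax-1}{2}$ needs care. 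If a direct bound overshoots, I would instead bound $\bar X^2/\hat{\bar T}$ through the per-step mean $\hat\mu_S\in[-\varepsilon,\varepsilon]$, writing it as $\hat{\bar T}\hat\mu_S^2$ and comparing with $\bar T\mu_S^2$. The final step is the union bound over all Hoeffding events. The step and trajectory statements share events, so the total failure probability is at most $\delta$ and the two bounds hold simultaneously.
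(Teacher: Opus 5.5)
Your architecture matches the paper's. You bound $\abs{K_{ij}-\hat K}$ by the triangle inequality against Theorem~\ref{thm:approx}, using $\E_i[T]\le\Tmax$, $\E_i[T^2]\le\Tmax^2$ and $\tfrac{\varepsilon}{6}\le\tfrac{\varepsilon}{4}$. You apply trajectory-level Hoeffding to the means of $S_r$ (the paper's $Z_r$), $X_r$, $X_r^2$ and $T_r$, with failure probability $\delta/4$ each, and take a union bound. Your trajectory case reproduces the paper's computation exactly.

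The gap is in the step case, at the ratio term you flag, and it is not only a matter of constants. Write $b_Z=\Tmax\varepsilon^2u_N$, $b_X=2\Tmax\varepsilon u_N$ and $b_T=(\Tmax-1)u_N$ for the Hoeffding radii. What you need is $\abs{\hat{\bar T}\hat D_S^2-\bar T\Dstep}\le6\Tmax\varepsilon^2u_N$.

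\begin{itemize}
\item Your primary route divides $\abs{\overline X^2-\mu^2}\le4\Tmax^2\varepsilon^2u_N$ by $\hat{\bar T}\ge1$, which leaves a term of order $\Tmax^2\varepsilon^2u_N$.
\item Your denominator term $\varepsilon^2(\bar T/\hat{\bar T})\,b_T$ carries the mixed ratio $\bar T/\hat{\bar T}$. Knowing only $\hat{\bar T}\ge1$ bounds this ratio by $\Tmax$, not by a constant. The Hoeffding event $\abs{\hat{\bar T}-\bar T}\le b_T$ does not make it $O(1)$ without a lower bound on $N$.
\end{itemize}

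Any split of the form ``numerator change over one denominator, plus denominator change times one squared numerator'' creates such a ratio. So this route gives the wrong order in $\Tmax$.

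The missing idea is a balanced identity in which every coefficient is a per-step mean:
\[
\frac{\overline X^2}{\hat{\bar T}}-\frac{\mu^2}{\bar T}
=\Bigl(\frac{\overline X}{\hat{\bar T}}+\frac{\mu}{\bar T}\Bigr)(\overline X-\mu)
-\frac{\overline X}{\hat{\bar T}}\cdot\frac{\mu}{\bar T}\,(\hat{\bar T}-\bar T).
\]
Since $\abs{\overline X}\le\varepsilon\hat{\bar T}$ and $\abs{\mu}\le\varepsilon\bar T$, the two coefficients are at most $2\varepsilon$ and $\varepsilon^2$. The step error is therefore at most $b_Z+2\varepsilon b_X+\varepsilon^2b_T=(6\Tmax-1)\varepsilon^2u_N$. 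Halving gives $\tfrac{6\Tmax-1}{2}\varepsilon^2u_N\le3\Tmax\varepsilon^2u_N$.

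Your fallback via $\hat\mu_S=\overline X/\hat{\bar T}$ and $\mu_S=\mu/\bar T$ is the right instinct. Writing $\hat{\bar T}\hat\mu_S^2-\bar T\mu_S^2=\hat\mu_S\overline X-\mu_S\mu$ and splitting this product recovers exactly the identity above. The more obvious split $\hat{\bar T}(\hat\mu_S^2-\mu_S^2)+(\hat{\bar T}-\bar T)\mu_S^2$, however, gives $b_Z+2\varepsilon b_X+3\varepsilon^2b_T$. After halving this is $(4\Tmax-\tfrac32)\varepsilon^2u_N$, which exceeds $3\Tmax\varepsilon^2u_N$ whenever $\Tmax\ge2$.

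As written, the step-case sampling constant, and hence the first inequality of the corollary, is not established.
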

The trajectory bound carries an extra factor of $T$ on every term. Since $K_{ij}\le\varepsilon^2\E_i[T]/2$ by Lemma~\ref{lem:local}(iii) and the chain rule, the step bound grows with $T$ like the KL it estimates, whereas the trajectory bound grows as $T^2$: summing log-ratios before squaring picks up cross terms between steps (Appendix~\ref{app:cors}). Sampling error shares the parentheses, so it keeps this factor; it decays as $N^{-1/2}$, while the other terms do not. The proof applies Hoeffding's inequality~\citep{hoeffding1963} to four sample means (Appendix~\ref{app:finite}).

\paragraph{Scope of the guarantees.}
The guarantees cover only \emph{close pairs}, with small $\varepsilon$ and $\lambda$. We therefore test empirically whether distances preserve the \emph{order} of the KL outside this regime (\S\ref{sec:klfit}) and whether close pairs behave alike when run (\S\ref{sec:behaviour}).

\section{What does the map capture about an agent?}
\label{sec:mapping}

This section fixes the harness to a standard configuration (BASE: a history window of the last four steps, no planning, direct action; \S\ref{sec:harnesssetup}) and places each LLM as one point. The question is what the coordinates capture about an agent. LM maps showed a \emph{correlation} between coordinates and downstream performance~\citep{oyama2025mapping}. We answer with three quantities: the target itself, the KL between the trajectory distributions of the agents when run (\S\ref{sec:klfit}); one projection of it, the success rate (\S\ref{sec:predicting}); and the behavior itself when run (\S\ref{sec:behaviour}).

\subsection{Does distance track the on-policy divergence?}
\label{sec:klfit}
Theorem~\ref{thm:approx} applies only in the local regime of small $\varepsilon$, that is, to nearly identical policies, whereas actual pairs of models may be far apart. We therefore check empirically whether the distances track the trajectory KL outside that regime, and which version, step or trajectory, tracks it better. In the bounds, the step version is favored by a factor of order $T$ (Table~\ref{tab:bounds}); we ask whether this advantage shows in the data. Measuring the trajectory KL directly requires rollouts of every model and $O(n^2)$ cross-scoring, so we use eight models: each solves a common set of ALFWorld~\citep{shridhar2021alfworld} train tasks, every model scores every trajectory, and the cross-scoring gives $K_{\mathrm{sym}}$ for the $\binom{8}{2}=28$ pairs. The distances use the pooled trajectories of the eight models as the common reference (Appendix~\ref{app:klcontrols}).

\begin{figure}[t]
  \centering
  \includegraphics[width=0.85\linewidth]{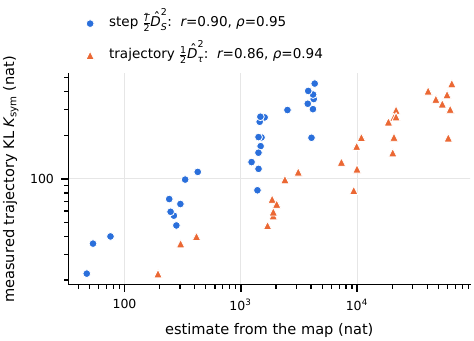}
  \caption{Map estimates, step version $\tfrac{\bar T}{2}\hat D_S^2$ and trajectory version $\tfrac12\hat D_\tau^2$, against the directly measured symmetric trajectory KL $K_{\mathrm{sym}}$: 8 models, 28 pairs, centered coordinates, log--log axes; error bars are standard errors over the 988 shared ALFWorld train tasks. Both track $K_{\mathrm{sym}}$, the step version more closely.}
  \label{fig:klfit}
\end{figure}

\paragraph{Both correlate with the KL, the step version more strongly.}
Figure~\ref{fig:klfit} shows the result (numbers in Table~\ref{tab:klfit}, Appendix~\ref{app:centering}). The centered step estimate $\tfrac{\bar T}{2}\hat D_S^2$ correlates with $K_{\mathrm{sym}}$ at Pearson $r=0.90$ and Spearman $\rho=0.95$, and the trajectory estimate $\tfrac12\hat D_\tau^2$ at $0.86$ / $0.94$: distances measured without running the agents preserve the order of the trajectory KL measured by running them. Two trends are consistent with the bounds (Appendix~\ref{app:centering}): the RMSE of the trajectory version is an order of magnitude larger than that of the step version, as the orders $T$ versus $T^2$ predict, and adding reference trajectories shrinks only the sampling error while the approximation error remains. Centering, although it adds a term to the bound, raises the correlation and lowers the RMSE for both versions. Reading the distances between the eight models from the 100-model map of \S\ref{sec:predicting} instead, with different reference trajectories and no grammar constraint, gives $r=0.86$ and $\rho=0.88$ (Appendix~\ref{app:centering}). Both versions overestimate the measured KL in level, by a median factor of 7.5 for the step version and 74 for the trajectory version (Appendix~\ref{app:centering}), so in what follows we use the distances only as a proxy for the order of the KL, and we use the step version.

\subsection{Performance: predicting without running}
\label{sec:predicting}
\paragraph{Setup.}
From here on we use 100 open-weight LLMs. The reference trajectories are 500 trajectories with 10{,}388 steps, generated by two models (Qwen3.8-27B~\citep{qwen2026qwen38} and Gemma4-E4B~\citep{gemmateam2026gemma4}) on 250 ALFWorld train tasks; every model scores them, and its centered step coordinates are computed from these scores. Only the two generators ran the reference trajectories; the other models only score them. Success rates and action sequences are obtained by running each model itself on different tasks, so the material for the coordinates and the material for the evaluation do not overlap. Scoring uses raw log-likelihoods without the grammar constraint; rescoring with the constraint barely moves the coordinates (Appendix~\ref{app:base100}).

\paragraph{Question.}
Can the downstream success rate of a model that has only computed likelihoods on the reference trajectories, and has never been run in the environment, be predicted from its coordinates? We hold out one model at a time and predict it from the remaining 99. The downstream tasks are 50 ALFWorld valid\_unseen tasks, the environment of the coordinates, and 50 tasks each from \emph{different environments}, TextWorld~\citep{cote2018textworld}, BFCL~\citep{patil2025bfcl}, and MiniWoB++~\citep{shi2017wob,liu2018miniwobpp}, plus the averages over the four benchmarks and over the three other environments (Appendix~\ref{app:predict}). The baselines use only information \emph{available without running}: a regression on the log parameter count (size), the mean success rate of the same family (family), and a regression on the mean log-likelihood $\bar\ell$ (the component removed by centering). On the coordinate side we use kNN, Ridge~\citep{hoerl1970ridge}, and Ridge with size as a covariate, with hyperparameters chosen by inner leave-one-out (Appendix~\ref{app:predict}).

\begin{table}[t]
  \centering\small
  \caption{Predicting success rates without running the agents: Pearson $r$ under leave-one-model-out over 100 models. Baselines use size ($\log$ parameter count), the family mean, or $\bar\ell$ (mean log-likelihood); coordinate methods are kNN, Ridge, and Ridge with size as a covariate (+size). Bold: higher than size, with a bootstrap 95\% interval of the difference excluding zero. RMSE and all predictors in Table~\ref{tab:pred_full}.}
  \label{tab:pred}
  \setlength{\tabcolsep}{3.5pt}
  \footnotesize
  \begin{tabular}{@{}l ccc ccc@{}}
    \toprule
    & \multicolumn{3}{c}{Baselines} & \multicolumn{3}{c}{Coordinates} \\
    \cmidrule(lr){2-4}\cmidrule(lr){5-7}
    Downstream task & Size & Family & $\bar\ell$ & kNN & Ridge & +size \\
    \midrule
    ALFWorld & .63 & .46 & .20 & \textbf{.88} & \textbf{.81} & \textbf{.82} \\
    TextWorld & .71 & .44 & .33 & .76 & .77 & \textbf{.78} \\
    BFCL & .48 & .34 & .32 & \textbf{.69} & \textbf{.69} & \textbf{.65} \\
    MiniWoB++ & .64 & .24 & .22 & .28 & .56 & .63 \\
    \midrule
    4-bench avg. & .74 & .42 & .34 & .74 & .78 & \textbf{.83} \\
    3 other envs. & .72 & .40 & .34 & .72 & .75 & \textbf{.81} \\
    \bottomrule
  \end{tabular}
\end{table}

\paragraph{Results.}
Table~\ref{tab:pred} reports Pearson $r$. In ALFWorld, the environment of the coordinates, kNN reaches $r=0.88$ against $0.63$ for size (the bootstrap interval of the difference excludes zero): the success rate of a model that has only run forward passes on the 500 reference trajectories is predicted from the results of the other 99 models with an RMSE of 9 points on the 50 valid\_unseen tasks. In the other environments, all three coordinate methods beat size on BFCL ($0.65$--$0.69$ versus $0.48$) and TextWorld ($0.76$--$0.78$ versus $0.71$); on MiniWoB++ size is stronger ($0.64$ versus $0.56$ for Ridge), although a quantity built from likelihoods alone still captures this much. Combining size and coordinates matches size on MiniWoB++ ($0.63$) and improves the four-benchmark average to $0.83$ (size $0.74$) and the three-environment average to $0.81$ ($0.72$). Since $\bar\ell$ is weak and the coordinates are strong, what matters is not the overall \emph{level} of the likelihood but its \emph{shape}: which actions a model finds natural in which states. A success rate, however, is only one projection of behavior when run; we ask next whether that is all the coordinates capture.

\subsection{Behavior: do neighbors act alike?}
\label{sec:behaviour}
In \S\ref{sec:predicting}, models that are close in coordinates had similar success rates. Do neighbors also share their \emph{behavior} when run? Behavior can only be measured as a distance between a pair of models, so the question becomes whether the behavioral distance from a model to its coordinate neighbors is smaller than its distance to the remaining models. We look at neighbors rather than at the correlation over all pairs because the guarantee covers only close pairs (\S\ref{sec:finite}). Coordinate neighbors are compared with two kinds of neighbors. The first are neighbors chosen by size, family, or $\bar\ell$, which, as in Table~\ref{tab:pred}, use only information available without running. The second are neighbors chosen by the measured success rate on that benchmark, a reference that is available only after running. Agents with the same success rate do not necessarily solve tasks in the same way, so the gap to the second kind shows what the coordinates capture beyond closeness in performance.

\paragraph{Setup.}
We use the action sequences that each model itself produced with the BASE harness: ALFWorld valid\_unseen (50 tasks), the environment of the coordinates, and the other environments TextWorld (50), BFCL multi\_turn (25), and MiniWoB++ (50), all for the 100 models. The \textbf{behavioral distance} between two models is the distance between their trajectories on the same task, averaged over tasks. Since similar behavior is not a single number, we measure it in four aspects from coarse to fine (definitions in Appendix~\ref{app:behaviour}):
\begin{itemize}[leftmargin=*,itemsep=0pt,topsep=1pt]
  \item \textbf{Composition} (which actions are used how often, ignoring order): Jensen--Shannon distance between the distributions of action types~\citep{endres2003metric} (type JS)
  \item \textbf{Global} (agreement of the whole action sequence): normalized Levenshtein distance~\citep{levenshtein1966,rabanser2026reliability} (edit distance)
  \item \textbf{Local} (what follows a given action): Jaccard distance between the sets of action 2-grams (2-gram)
  \item \textbf{Early} (whether the same first move is made from the same state): disagreement rate of the first action from the same initial state~\citep{bouchard2026trajectory,gonuguntla2026replay} (first action)
\end{itemize}
The difference in success rate is measured in the same form, which puts the results of \S\ref{sec:predicting} on the same scale. The \textbf{effect size} for a model's five coordinate neighbors is the mean behavioral distance to the remaining models minus the mean distance to the neighbors, divided by the standard deviation over all pairs (positive: neighbors more similar). Figure~\ref{fig:map}(c) applies the same definition to the \mbox{$k$-th} nearest model alone, in coordinates and in size, averaged over the 100 models and the four aspects.

\begin{table}[t]
  \centering\small
  \caption{Effect sizes of the behavioral similarity of each model's five neighbors (\S\ref{sec:behaviour}; positive: neighbors more similar than the remaining models), for the difference in success rate and the four aspects of \S\ref{sec:behaviour}. Bold: largest of the four columns other than the reference. $^*$: $p<0.01$, $^+$: $p<0.05$; $^\dagger$: significantly closer than success-rate neighbors ($p<0.05$). The success-rate row is omitted for success-rate neighbors. All measures in Table~\ref{tab:behaviour_full}.}
  \label{tab:behaviour}
  \setlength{\tabcolsep}{3pt}
  \renewcommand{\arraystretch}{1.04}
  \scriptsize
  \begin{tabular}{@{}l ccc c c@{}}
    \toprule
    & \multicolumn{3}{c}{Available without running} & Map & Reference \\
    \cmidrule(lr){2-4}\cmidrule(lr){5-5}\cmidrule(lr){6-6}
    Benchmark / measure & Size & Family & $\bar\ell$ & Coord. & Success \\
    \midrule
    \multicolumn{6}{@{}l}{\emph{ALFWorld}} \\
    \quad Success rate & .41$^{*}$ & .49$^{*}$ & .20$^{*}$ & \textbf{.60$^{*}$} & --- \\
    \addlinespace[1.5pt]
    \quad Composition & .23$^{*}$ & .28$^{*}$ & .05 & \textbf{.52$^{*\dagger}$} & .30$^{*}$ \\
    \quad Global & .40$^{*}$ & .62$^{*}$ & .18$^{+}$ & \textbf{.88$^{*\dagger}$} & .65$^{*}$ \\
    \quad Local & .41$^{*}$ & .68$^{*}$ & .20$^{*}$ & \textbf{.96$^{*\dagger}$} & .74$^{*}$ \\
    \quad Early & .30$^{*}$ & .49$^{*}$ & .11$^{*}$ & \textbf{.61$^{*\dagger}$} & .41$^{*}$ \\
    \addlinespace[3pt]
    \multicolumn{6}{@{}l}{\emph{TextWorld}} \\
    \quad Success rate & .58$^{*}$ & .56$^{*}$ & .27$^{*}$ & \textbf{.61$^{*}$} & --- \\
    \addlinespace[1.5pt]
    \quad Composition & .10 & .27$^{*}$ & .06 & \textbf{.33$^{*\dagger}$} & .22$^{*}$ \\
    \quad Global & .25$^{*}$ & .42$^{*}$ & .14 & \textbf{.52$^{*}$} & .50$^{*}$ \\
    \quad Local & .27$^{+}$ & .44$^{*}$ & .14 & \textbf{.54$^{*}$} & .52$^{*}$ \\
    \quad Early & .14$^{*}$ & \textbf{.25$^{*}$} & .04 & .21$^{*}$ & .16$^{*}$ \\
    \addlinespace[3pt]
    \multicolumn{6}{@{}l}{\emph{BFCL (multi\_turn)}} \\
    \quad Success rate & .27$^{*}$ & .30$^{*}$ & .15$^{*}$ & \textbf{.47$^{*}$} & --- \\
    \addlinespace[1.5pt]
    \quad Composition & .10$^{+}$ & \textbf{.21$^{*}$} & .03 & .18$^{*}$ & .23$^{*}$ \\
    \quad Global & .13$^{*}$ & \textbf{.26$^{*}$} & .06 & .26$^{*}$ & .30$^{*}$ \\
    \quad Local & .21$^{*}$ & .32$^{*}$ & .09$^{+}$ & \textbf{.38$^{*}$} & .39$^{*}$ \\
    \quad Early & .19$^{*}$ & .25$^{*}$ & .09$^{*}$ & \textbf{.32$^{*}$} & .21$^{*}$ \\
    \addlinespace[3pt]
    \multicolumn{6}{@{}l}{\emph{MiniWoB++}} \\
    \quad Success rate & \textbf{.42$^{*}$} & .37$^{*}$ & .20$^{*}$ & .39$^{*}$ & --- \\
    \addlinespace[1.5pt]
    \quad Composition & .12$^{*}$ & .22$^{*}$ & .08$^{*}$ & \textbf{.24$^{*\dagger}$} & .13$^{*}$ \\
    \quad Global & .20$^{*}$ & .32$^{*}$ & .10$^{+}$ & \textbf{.38$^{*}$} & .37$^{*}$ \\
    \quad Local & .27$^{*}$ & .47$^{*}$ & .16$^{+}$ & \textbf{.50$^{*}$} & .42$^{*}$ \\
    \quad Early & .35$^{*}$ & .35$^{*}$ & .09$^{+}$ & \textbf{.50$^{*\dagger}$} & .28$^{*}$ \\
    \bottomrule
  \end{tabular}
\end{table}

\paragraph{Models that behave alike can be found without running them.}
The five coordinate neighbors not only have similar success rates ($0.39$--$0.61$) but also behave alike on all four benchmarks (Table~\ref{tab:behaviour}). The effect is roughly twice that of size neighbors (edit distance on ALFWorld $0.88$ versus $0.40$; 2-gram on MiniWoB++ $0.50$ versus $0.27$), and $\bar\ell$ neighbors are weak ($0.03$--$0.20$). Family is a stronger competitor than size: coordinates exceed it on ALFWorld, TextWorld, and MiniWoB++ (except for the first action on TextWorld) and tie with it on BFCL. The effect remains when the coordinate neighbors are restricted to other families (edit distance on ALFWorld $0.60$), so it is not produced by same-family dependence. Similarity decreases with rank: the first coordinate neighbor is the most similar and the effect returns to the overall mean around rank 30, whereas size neighbors are nearly flat beyond the first (Figure~\ref{fig:map}(c)). The divergence position and the profile of idiosyncrasies show the same pattern, while for the success pattern, neighbors chosen by family or size are about as close as or closer than coordinate neighbors (Appendix~\ref{app:behaviour}).

\paragraph{Beyond closeness in performance.}
In ALFWorld, the environment of the coordinates, coordinate neighbors behave more alike than neighbors chosen by the measured success rate in all four aspects ($\dagger$ in Table~\ref{tab:behaviour}); in the other environments the two are roughly on a par, with coordinates significantly closer only for composition on TextWorld and MiniWoB++ and for the first action on MiniWoB++. We also regress out performance and size (Appendix~\ref{app:behaviour}): because strong models may converge on the shortest procedures, we regress each pair's behavioral distance on the difference in success rate, the pair's mean success rate and its square, and the difference in log parameter count, and measure the neighbor effect on the residuals. In ALFWorld the effect remains (edit distance $0.61$, 2-gram $0.68$) and stays positive when the candidates are restricted to other families ($0.31$, $0.31$); in the other environments it is weak, remains mainly in the local and early measures, and for some measures vanishes or changes sign. This linear control shows only that an association remains that the observed success rates and sizes do not explain. That the coordinates are built from the likelihood of the next action in a fixed state is consistent with the effect remaining in the local and early measures.

\section{Composing Harness Effects in Agent Behavior Maps}
\label{sec:harness}

\begin{figure}[!t]
  \centering
  \includegraphics[width=0.85\linewidth]{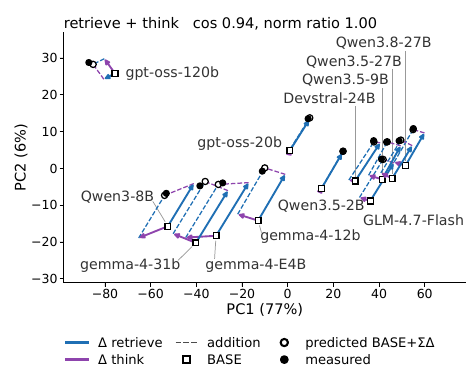}
  \caption{\textbf{Harness displacements add up within an LLM.} The 144 $\tau$-bench agents (12 LLMs $\times$ 12 harnesses) in their first two principal components (PC1 separates LLMs; axes scaled independently). Squares: BASE; solid arrows: single-component displacements; dashed: their sum, ending at the predicted position of retrieve + think (open circles); filled circles: its measured position. Cosine and norm ratio: medians over the 12 LLMs in the full space.}
  \label{fig:additivity}
\end{figure}

Section~\ref{sec:mapping} fixed the harness; this section fixes the LLM, varies the harness, and asks \textbf{what structure the map has with respect to harnesses}. A harness combines components for context management, planning, and acting. If adding a component appears as a displacement within an LLM and these displacements add up, the map reads as the position of the LLM plus a sum of component vectors, and the coordinates of an unscored combination follow from the scores of single components. We test whether the approximate additivity observed for prompt changes in LM maps~\citep{takase2026maps} also holds for agent harnesses (\S\ref{sec:additivity}).

\subsection{Harness interventions and setup}
\label{sec:harnessvec}
\label{sec:harnesssetup}
\paragraph{Harnesses.}A harness splits the rule $C_i$ that builds the LLM's input from the external history $h_t$ (\S\ref{sec:targets}) into three slots.
\begin{itemize}[leftmargin=*,itemsep=0pt,topsep=1pt]
  \item \textbf{Context management} (which part of the history is shown): windowed passes the initial observation and the last four steps verbatim; retrieve adds past steps retrieved for their relevance; summarize replaces old history with a structured summary.
  \item \textbf{Planning} (whether a plan is rewritten before acting): none / replan.
  \item \textbf{Acting} (whether free-form reasoning is written before the action): direct / think.
\end{itemize}
System instructions, tool definitions, and the initial observation are shared by all harnesses; plans and reasoning enter only the current step's input, so the external history consists of actions and observations only and every harness builds its input from the same $h_t$, both when running and when scoring. The $3\times2\times2=12$ combinations include BASE (windowed/none/direct), four \emph{single-component} harnesses that add one component to BASE, and seven \emph{composed} harnesses that add two or more (specifications in Appendix~\ref{app:harnesssetup}).

\paragraph{Harness vectors.}For the same LLM $m$, the difference between the coordinates of BASE and of the harness that adds component $A$, $\Delta^{(m)}_A:=z(m,A)-z(m,\base)$, is the \textbf{harness vector}, and the sum of harness vectors predicts the coordinates of a composition, $\hat z(m,A{+}B):=z(m,\base)+\Delta^{(m)}_A+\Delta^{(m)}_B$.
Since the coordinates are centered log-likelihoods on the common reference trajectories, $\Delta_A$ represents how component $A$ changes the relative likelihood of the reference actions; the change in the mean log-likelihood is reported separately as a scalar (Appendix~\ref{app:harnesssetup}).

\paragraph{Setup.}All 12 configurations are scored, but only the coordinates of the five configurations BASE and single components enter the composition; the coordinates of the remaining seven are constructed as their sums. With 12 LLMs (listed in Appendix~\ref{app:harnesssetup}), the coordinates of 60 agents thus yield composed coordinates for 84. The measured coordinates of the composed harnesses are used only for validation. The coordinates are obtained by teacher forcing on 359 HAL~\citep{kapoor2025hal} traces of $\tau$-bench~\citep{yao2025taubench}; ALFWorld and SWE-Bench (6 LLMs) are in Appendix~\ref{app:harness}.

\subsection{Predicting coordinates of unscored compositions}
\label{sec:additivity}
Does the sum of the single-component displacements approach the measured displacement $z(m,A{+}B)-z(m,\base)$ of the composition? Figure~\ref{fig:additivity} shows one composition, retrieve + think, as an example; all seven are in Figure~\ref{fig:additivity_all} (Appendix~\ref{app:additivity}). For all compositions, Table~\ref{tab:additivity_full} in the appendix reports, within each LLM, the cosine between the measured displacement and the predicted displacement $\Delta_A+\Delta_B$, the norm ratio (measured / predicted), and the relative reconstruction error $\norm{\text{measured}-\text{predicted}}/\norm{\text{measured}}$, as medians over LLMs. We compare displacements from BASE because absolute coordinates share a large component common to the LLM, which alone would make them look close. The arrows of the same component also look parallel across LLMs in Figure~\ref{fig:additivity}; this cross-LLM agreement of directions is not analyzed in this paper.

For two-component compositions, the median cosine over the 60 cases (12 LLMs $\times$ 5 compositions) is $0.90$ and the median norm ratio $0.98$, so the sum roughly matches in direction and length. The agreement varies by composition: retrieve with planning or reasoning agrees best (cosine $0.97$ / $0.94$, relative error $0.25$ / $0.34$), compositions with summarize keep the direction but deviate more (cosine $0.91$ / $0.87$), and replan + think is weakest (cosine $0.81$, norm ratio $0.74$: the measured displacement is shorter than predicted). Three-component compositions reach cosine $0.82$ and norm ratio $0.87$ (median over $2\times12$ cases). On ALFWorld and SWE-Bench (6 LLMs) the agreement weakens to median two-component cosines of $0.63$ and $0.81$ (Table~\ref{tab:additivity_full}). Additivity thus depends on the components and the environment, but for context management combined with planning or reasoning, the direction and magnitude of the composed displacement can be approximated from single-component displacements.
\section{Related Work}
\label{sec:related}

\paragraph{Maps of language models and policies.}
LM maps~\citep{oyama2025mapping} place models in a common coordinate system by their log-likelihoods on fixed texts and have been extended to prompt--response distributions~\citep{takase2026maps}; other works embed models by their outputs or correctness~\citep{wu2025llmdna,yax2024phylolm,zhuang2024embedllm}. All of them address static LMs. In reinforcement learning, policies are compared by their actions on a fixed set of states~\citep{parkerholder2020dvd,pacchiano2020behaviors,konyushkova2021aops}. We extend LM maps to closed-loop agents and quantify the cost of measuring on fixed trajectories as the term $\lambda$, which accumulates over steps like the compounding error of imitation learning~\citep{ross2010efficient,ross2011dagger}. Predicting performance from scores on cheaper benchmarks~\citep{ruan2024observational,song2026pace} shares our goal but needs the target model's scores, whereas our coordinates need neither scores nor environment runs.

\paragraph{Harness effects and agent behavior.}
Harness effects have been measured as changes in success rate by running every LLM--harness combination~\citep{zhang2026stopcomparing,yao2026harnessbench,fan2026harnessdesign,cho2026nonmonotone,chen2026agentspec} or decomposed into LLM and scaffold abilities with IRT~\citep{ge2026psychometrics}; we represent a component as a displacement on the map, obtained without running the composed agent, and ask whether displacements add (\S\ref{sec:harness}). Evaluations increasingly report consistency across trials, cost, and behaviors found in logs alongside success rates~\citep{yao2025taubench,kapoor2025agentsmatter,kapoor2025hal}; harnesses with similar success rates can differ greatly in token use and failure patterns~\citep{vats2026scaffold}, and agent behavior carries fingerprints of the underlying LLM~\citep{sun2025idiosyncrasies,wang2026lidar,oderinwale2026trajectories,lugoloobi2026uitraces}. All of these works run the agents and compare the resulting behavior through aggregate statistics or classifiers. We measure behavioral differences by distances obtained without running the agents, relate them to the trajectory KL with explicit error bounds (\S\ref{sec:method}), and show that pairs at small distance behave alike when run (\S\ref{sec:behaviour}).
\section{Conclusion}
\label{sec:conclusion}
Agent Behavior Maps, obtained by teacher forcing on common reference trajectories, locate agents without running them. Their coordinates predict the success rates of models that were never run, on the benchmark the map was built from and, combined with size, in other environments. Neighbors on the map take similar action sequences when run. The theory decomposes the gap between the distances and the trajectory KL into four error sources and limits the guarantee to close pairs. Harness components appear as displacements within an LLM whose sums approximate the coordinates of unscored combinations, with an accuracy that depends on the component and the environment. The map represents not only a ranking by performance but also how close agents are in what they do.

\section*{Limitations}
\label{sec:limitations}

\paragraph{Locality.}
Both the theoretical guarantee and the empirical correspondence are limited to close pairs. The KL can be read off the distance only for pairs in the local regime of small $\varepsilon$ and $\lambda$, and pairs of different LLMs are not guaranteed to fall in it: the estimates exceed the measured KL by a median factor of 7.5, although their order agrees (\S\ref{sec:klfit}). The correspondence with behavioral distance holds for neighbors and fades beyond about the 30th neighbor (Figure~\ref{fig:map}(c)); we have not shown that the magnitude of the distance between far-apart pairs measures how different their behavior is. The map answers who is similar to whom, not by how much.

\paragraph{Dependence on the reference trajectories.}
Differences between policies in regions that $Q$ never visits cannot be measured, and the reference mismatch $\lambda$ remains a sensitivity parameter that the likelihood matrix alone does not determine. A general theory of the reference policy is left for future work.

\paragraph{Estimation of the KL.}
The trajectory log-likelihood ratios in \S\ref{sec:klfit} are based on the probability of the canonical token sequence and differ from action probabilities that marginalize over all tokenizations. In environments where the candidate set can be enumerated, this can be resolved by renormalization, but in general it is a limitation.

\paragraph{Access to log-likelihoods.}
The coordinates require the target model's log-likelihood of a given action sequence. Models that are available only through an API that does not return the log-probabilities of given text cannot be placed on the map, so this study is limited to open-weight models.

\paragraph{Sample size of behavior.}
The behavioral comparisons are based on one rollout per task, and the behavior of the same model varies from run to run~\citep{bjarnason2026randomness,gonuguntla2026replay}. We have not compared the size of this variation with the behavioral distance to neighbors.

\paragraph{Scope of the harness experiment.}
The harnesses in \S\ref{sec:harness} are limited to the 12 configurations of our own three-slot design (context management, planning, acting) and to 12 LLMs; the main additivity results are on $\tau$-bench, and ALFWorld and SWE-Bench cover 6 LLMs. Whether the same additivity holds for other scaffolds with differently designed components has not been tested.

\bibliography{refs}

\appendix
\renewcommand{\topfraction}{0.95}
\renewcommand{\bottomfraction}{0.6}
\renewcommand{\dbltopfraction}{0.95}
\renewcommand{\textfraction}{0.05}
\renewcommand{\floatpagefraction}{0.9}
\renewcommand{\dblfloatpagefraction}{0.9}
\setcounter{topnumber}{3}
\setcounter{dbltopnumber}{2}
\setcounter{totalnumber}{5}
\section{Theory: proofs and scoring details}
\label{app:theory}
This appendix supplements \S\ref{sec:method}. Appendix~\ref{app:probability} defines the policy and the scoring convention; Appendices~\ref{app:proofs}--\ref{app:uncentered} prove Theorem~\ref{thm:approx}, Corollary~\ref{thm:sample}, and the cross-term identity~\eqref{eq:cross}. Notation follows \S\ref{sec:vectors}.

\subsection{Grammar-constrained action probabilities}
\label{app:probability}

We define action probabilities under grammar-constrained decoding. A deployed agent can execute only outputs that match the environment's action format, so its outputs are typically constrained to an action grammar, e.g., by constrained decoding. Our grammar constraint reflects this setting: the constrained policy is the distribution of actions the agent actually sends to the environment.

Logged actions are serialized by a common normalization map $\mathcal N$ that preserves their effect on the environment. Let $\mathcal A(h)$ be the set of action strings that satisfy the grammar under history $h$. Agent $i$ reconstructs its input $C_i(h)$ from $h$ with its rule $C_i$ (\S\ref{sec:method}). Let $q_i(v\mid C_i(h),v_{<k})$ be the model's next-token probability given this input and a prefix $v_{<k}$, and let $\mathcal V(h,v_{<k})$ be the set of tokens allowed by the grammar after $v_{<k}$. Grammar-constrained decoding sets the probabilities of disallowed tokens to zero and renormalizes the rest: for $v_k\in\mathcal V(h,v_{<k})$,
\begin{equation}
q_i^{\mathcal G}(v_k\mid h,v_{<k})=\frac{q_i(v_k\mid C_i(h),v_{<k})}{\sum\limits_{v\in\mathcal V(h,v_{<k})}q_i(v\mid C_i(h),v_{<k})}.
\label{eq:grammar-renorm}
\end{equation}
The probability of an action $a$ is the sum over all token sequences $v$ that normalize to $a$,
\begin{equation}
\pi_i(a\mid h)=\sum_{v:\,\mathcal N(\mathrm{decode}(v))=a}\ \prod_kq_i^{\mathcal G}(v_k\mid h,v_{<k}),
\label{eq:text-policy}
\end{equation}
which satisfies $\sum_{a\in\mathcal A(h)}\pi_i(a\mid h)=1$.

The trajectory-KL comparison in \S\ref{sec:klfit} uses grammar-constrained scoring; the experiments from \S\ref{sec:predicting} onward score without the grammar mask.

\subsection{Proof of Theorem~\ref{thm:approx}}
\label{app:proofs}
Fix a pair $(i,j)$. By the chain rule~\eqref{eq:chain}, $K_{ij}$ is the expected sum of per-step local KLs. Its gap to $\tfrac{\bar T}2\Dstep$ and $\tfrac12\Dtraj$ has three sources:
\begin{enumerate}[label=(\alph*),leftmargin=*,itemsep=1pt,topsep=2pt]
\item the remainder of the quadratic approximation of the local KL (Lemma~\ref{lem:local});
\item taking the quadratic term under the reference $Q$ instead of the on-policy $P_i$ (Lemma~\ref{lem:bounded});
\item centering.
\end{enumerate}
We prove the two lemmas and then the theorem with explicit constants.

\begin{lemma}[Quadratic approximation of the local KL]
\label{lem:local}
Let $p,q$ be distributions with a common support, let $Y=\log(p/q)$ with $\abs Y\le\varepsilon$, and write $Y_\pm=\max(\pm Y,0)$. Then the following hold.

(i) Two-sided bound:
\begin{equation}
-\tfrac16\E_p[Y^3]\le\KL(p\Vert q)-\tfrac12\E_p[Y^2]\le-\tfrac16\E_q[Y^3].
\label{eq:local-two-sided}
\end{equation}

(ii) Size of the remainder:
\begin{align}
\bigl|\KL(p\Vert q)-\tfrac12\E_p[Y^2]\bigr|&\le\tfrac16\max\{\E_p[Y_+^3],\E_q[Y_-^3]\}\notag\\
&\le\tfrac{\varepsilon^3}6.
\label{eq:local-proof}
\end{align}

(iii) Size of the local KL: $0\le\KL(p\Vert q)\le\varepsilon^2/2$.
\end{lemma}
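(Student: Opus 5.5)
The plan is to reduce everything to a one-variable exponential identity. Because $p$ and $q$ share a support and $q=p\,e^{-Y}$ there, we have $\E_p[e^{-Y}]=1$. Hence
\[
\KL(p\Vert q)=\E_p[Y]=\E_p\bigl[Y+e^{-Y}-1\bigr].
\]
Taylor's theorem with Lagrange remainder gives, pointwise,
\[
e^{-y}=1-y+\tfrac12y^2-\tfrac16y^3e^{-\xi(y)},
\]
with $\xi(y)$ strictly between $0$ and $y$. Substituting yields the exact representation
\[
\KL(p\Vert q)-\tfrac12\E_p[Y^2]=-\tfrac16\E_p\bigl[Y^3e^{-\xi(Y)}\bigr].
\]
The whole of (i) and (ii) is then a matter of comparing $e^{-\xi}$ with $1$ and with $e^{-Y}$ on the two sign cases.

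For (i), I will check the pointwise inequality $-y^3\le -y^3e^{-\xi}\le -y^3e^{-y}$, treating the signs of $y$ separately.
\begin{itemize}
\item If $y>0$, then $\xi\in(0,y)$, so $e^{-y}\le e^{-\xi}\le1$. Multiplying by $-y^3<0$ reverses the order and gives the claim.
\item If $y<0$, then $\xi\in(y,0)$, so $1\le e^{-\xi}\le e^{-y}$. Multiplying by $-y^3>0$ preserves the order.
\end{itemize}
Taking $\E_p$ and using $\E_p[Y^3e^{-Y}]=\E_q[Y^3]$ (the change of measure $q=pe^{-Y}$) gives exactly \eqref{eq:local-two-sided}.

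For (ii), I will drop the favourable parts of each side. On the left, $-\E_p[Y^3]\ge-\E_p[Y_+^3]$. On the right, $-\E_q[Y^3]\le\E_q[Y_-^3]$. The absolute deviation is therefore at most $\tfrac16\max\{\E_p[Y_+^3],\E_q[Y_-^3]\}$. Since $Y_\pm^3\le\varepsilon^3$ pointwise, this is at most $\varepsilon^3/6$.

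The step I expect to need a different idea is the upper bound in (iii). The representation above is not enough, because for $Y<0$ the remainder makes $\KL$ exceed $\tfrac12\E_p[Y^2]$. The lower bound $\KL\ge0$ is simply Gibbs' inequality. For the upper bound I will use the cumulant function $\psi(s)=\log\E_q[e^{sY}]$ on $[0,1]$, which has the following properties.
\begin{itemize}
\item $\psi(0)=0$, and $\psi(1)=\log\E_q[p/q]=0$.
\item $\psi'(1)=\E_p[Y]=\KL(p\Vert q)$.
\item $\psi''(s)$ is the variance of $Y$ under the tilted law $\propto q\,e^{sY}$. Since $Y$ takes values in $[-\varepsilon,\varepsilon]$, Popoviciu's inequality gives $\psi''(s)\le\varepsilon^2$.
\end{itemize}
Expanding $\psi$ about $s=1$ with integral remainder gives
\[
0=\psi(0)=\psi(1)-\psi'(1)+\int_0^1 s\,\psi''(s)\,ds.
\]
Hence
\[
\KL(p\Vert q)=\int_0^1 s\,\psi''(s)\,ds\le\varepsilon^2\int_0^1 s\,ds=\frac{\varepsilon^2}{2},
\]
which is (iii).
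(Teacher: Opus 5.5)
Your proof is correct and follows the paper's proof: (i) uses the same Lagrange remainder $e^{-y}-1+y-\tfrac12y^2=-\tfrac16y^3e^{-\xi}$, the same sign-by-sign comparison of $e^{-\xi}$ with $1$ and $e^{-y}$, and the same change of measure $\E_p[e^{-Y}f(Y)]=\E_q[f(Y)]$. For (iii), your $\psi(s)=\log\E_q[e^{sY}]$ is exactly the paper's $F(1-s)$ with $F(s)=\log\E_p[e^{-sY}]$, expanded with an integral rather than a Lagrange remainder; for (ii), the only cosmetic difference is that you weaken the two sides of (i) directly, whereas the paper re-splits $\E_p[g(Y)]$ by the sign of $Y$ into two nonnegative pieces, which gives the same bound.
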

\begin{proof}
Let $g(y)=e^{-y}-1+y-y^2/2$. Since $\E_p[e^{-Y}]=\sum q=1$,
\begin{equation}
\KL(p\Vert q)-\tfrac12\E_p[Y^2]=\E_p[g(Y)],
\label{eq:local-remainder}
\end{equation}
so it suffices to bound the right-hand side. By Taylor's theorem, $g(y)=-\frac{y^3}6e^{-\xi}$ for some $\xi$ between $0$ and $y$. If $y\ge0$ then $e^{-\xi}\in[e^{-y},1]$, and if $y<0$ then $e^{-\xi}\in[1,e^{-y}]$; in either case
\begin{equation}
-\tfrac{y^3}6\le g(y)\le-\tfrac{y^3}6e^{-y}.
\label{eq:local-g}
\end{equation}
Averaging over $p$ and applying the change of measure $\E_p[e^{-Y}f(Y)]=\E_q[f(Y)]$ to the right-hand side, \eqref{eq:local-remainder} gives \eqref{eq:local-two-sided}.

Next, split \eqref{eq:local-g} by the sign of $Y$. For $Y\ge0$, $-\frac16Y_+^3\le g(Y)\le0$; for $Y<0$, $0\le g(Y)\le\frac16e^{-Y}Y_-^3$. Hence, with $A=\E_p[g(Y)\mathbf 1\{Y<0\}]$ and $B=-\E_p[g(Y)\mathbf 1\{Y\ge0\}]$, we have $\E_p[g(Y)]=A-B$ and
\begin{align}
0&\le A\le\tfrac16\E_p[e^{-Y}Y_-^3]=\tfrac16\E_q[Y_-^3],\notag\\
0&\le B\le\tfrac16\E_p[Y_+^3].
\end{align}
As a difference of two nonnegative terms, $\abs{A-B}\le\max\{A,B\}$, which is the first inequality in \eqref{eq:local-proof}; the second follows from $\abs Y\le\varepsilon$. Splitting by sign, rather than bounding $e^{-\xi}$ uniformly by $e^{\varepsilon}$, avoids a factor $e^{\varepsilon}$.

Finally, we bound the local KL. Let $F(s)=\log\E_p[e^{-sY}]$. Then $F(0)=0$, $F(1)=\log\E_p[e^{-Y}]=0$, and $F'(0)=-\E_p[Y]=-\KL(p\Vert q)$. Moreover, $F''(s)$ is the variance of $Y$ under the distribution proportional to $pe^{-sY}$, so $0\le F''(s)\le\varepsilon^2$. By Taylor's theorem, for some $s_0\in(0,1)$,
\begin{equation}
0=F(1)=-\KL(p\Vert q)+\tfrac12F''(s_0),
\end{equation}
which gives $0\le\KL(p\Vert q)\le\varepsilon^2/2$.
\end{proof}

\begin{lemma}[Difference of expectations of a bounded function]
\label{lem:bounded}
If $L\le f\le U$, then $\abs{\E_P[f]-\E_Q[f]}\le(U-L)\,\TV(P,Q)$.
\end{lemma}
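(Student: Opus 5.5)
The plan is to center $f$ on the midpoint of its range and then apply the variational characterization of total variation. Set $c=(U+L)/2$ and $g=f-c$. Then $\abs{g}\le(U-L)/2$, and since $P$ and $Q$ are both probability measures, $\E_P[f]-\E_Q[f]=\E_P[g]-\E_Q[g]$ because the constant cancels. This centering step matters: without it we would only get the weaker constant $2\max(\abs L,\abs U)$ instead of $U-L$.

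Next, write the difference of expectations against the signed measure $P-Q$. In the discrete case this is a sum, and in general it is an integral with respect to a common dominating measure $\nu$ with densities $p,q$:
\begin{equation*}
\abs{\E_P[g]-\E_Q[g]}=\Bigl|\int g\,(p-q)\,d\nu\Bigr|\le\sup\abs g\int\abs{p-q}\,d\nu\le\tfrac{U-L}2\cdot2\,\TV(P,Q).
\end{equation*}
Here we use the convention $\TV(P,Q)=\tfrac12\int\abs{p-q}\,d\nu=\sup_A\abs{P(A)-Q(A)}$, and the last inequality gives exactly $(U-L)\TV(P,Q)$.

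The only point needing care is that convention; if TV were defined without the factor $\tfrac12$, the constant would change. As a cross-check, I would also derive the bound from the positive part alone. With $A=\{p>q\}$, we have $\int(p-q)^+\,d\nu=\int(p-q)^-\,d\nu=\TV(P,Q)$. Then $\int f(p-q)\,d\nu\le U\,\TV-L\,\TV$, and the symmetric inequality follows the same way, which avoids centering altogether.

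In the application to Theorem~\ref{thm:approx}, $f$ will be the per-trajectory sum $\sum_{t<T}\tfrac12\E_{\pi_i}[d^2]$ or the squared log-ratio $X^2$. These are bounded by Assumption~1 together with $T\le\Tmax$, so no integrability issue arises.
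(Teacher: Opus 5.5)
Your proof is correct and is essentially the paper's: center at $c=(L+U)/2$, drop the constant using $\sum_\tau(P-Q)(\tau)=0$, and bound $\bigl|\sum_\tau(f-c)(P-Q)(\tau)\bigr|$ by $\tfrac{U-L}{2}\sum_\tau\abs{P(\tau)-Q(\tau)}=(U-L)\,\TV(P,Q)$; the only difference is that the paper works directly on the countable trajectory space instead of using a dominating measure. One small correction to your closing aside: the paper applies the lemma to the realized $Z=\sum_{t<T}d_t^2\in[0,\Tmax\varepsilon^2]$ (and to $X^2$), not to a sum of conditional expectations $\tfrac12\E_{\pi_i}[d_t^2]$, because only $Z$ itself satisfies $\E_QZ=\bar T\Dstep+\mu^2/\bar T$.
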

\begin{proof}
Trajectories take values in a countable set, so $\TV(P,Q)=\frac12\sum_\tau\abs{P(\tau)-Q(\tau)}$. Let $c=(L+U)/2$; then $\abs{f-c}\le(U-L)/2$, and since $\sum_\tau\{P(\tau)-Q(\tau)\}=0$,
\begin{align}
\abs{\E_P[f]-\E_Q[f]}&=\Bigl|\textstyle\sum_\tau(f-c)(P-Q)(\tau)\Bigr|\notag\\
&\le\tfrac{U-L}2\textstyle\sum_\tau\abs{P(\tau)-Q(\tau)}\notag\\
&=(U-L)\,\TV(P,Q).
\end{align}
\end{proof}

\paragraph{Proof of Theorem~\ref{thm:approx}.}
We show, with explicit constants, that $\abs{K_{ij}-\tfrac{\bar T}2\Dstep}\le R_S$ and $\abs{K_{ij}-\tfrac12\Dtraj}\le R_\tau$, where
\begin{align}
R_S&=\tfrac{\varepsilon^3}{6}\E_i[T]+\tfrac{\lambda\Tmax\varepsilon^2}{2}+\tfrac{\mu^2}{2\bar T},\label{eq:step-bound}\\
R_\tau&=\tfrac{\varepsilon^3}{6}\E_i[T]+\tfrac{\varepsilon^3}{4}\E_i[T(T-1)]\notag\\
&\quad+\tfrac{\lambda\Tmax^2\varepsilon^2}{2}+\tfrac{\mu^2}{2}.\label{eq:trajectory-bound}
\end{align}
The sum of the first two terms of \eqref{eq:trajectory-bound} is $\frac{\varepsilon^3}4\E_i[T^2]-\frac{\varepsilon^3}{12}\E_i[T]\le\frac{\varepsilon^3}4\E_i[T^2]$, which gives the form in \eqref{eq:Rt} and Table~\ref{tab:bounds}.

For $K_{ij}$ only the parts of Assumptions~1 and~2 concerning $P_i$ and $Q$ are used; the parts concerning $P_j$ enter through the symmetric KL below. We set $d_t=0$ after termination, and write $Z=\sum_{t<T}d_t^2$ and $\mathcal F_t=\sigma(h_t)$. Whether the episode is still running at time $t$ is determined by $h_t$, so $\{t<T\}$ is $\mathcal F_t$-measurable.

\paragraph{Step version.}
At each step $t<T$, set $p=\pi_i(\cdot\mid h_t)$ and $q=\pi_j(\cdot\mid h_t)$; then $Y=d_t$ and $\E_p[Y^2]=\E_i[d_t^2\mid\mathcal F_t]$. By Lemma~\ref{lem:local}(ii), $\KL(\pi_i(\cdot\mid h_t)\Vert\pi_j(\cdot\mid h_t))=\frac12\E_i[d_t^2\mid\mathcal F_t]+r_t$ with $\abs{r_t}\le\varepsilon^3/6$. Substituting this into the chain rule~\eqref{eq:chain} and using the tower property gives
\begin{equation}
K_{ij}=\tfrac12\E_iZ+r_*,\qquad\abs{r_*}\le\tfrac{\varepsilon^3}6\E_i[T].
\label{eq:step-expansion}
\end{equation}
On the other hand, $\bar T\Dstep=\E_Q\sum_{t<T}(d_t-\mu/\bar T)^2=\E_QZ-2\frac{\mu}{\bar T}\E_QX+\frac{\mu^2}{\bar T^2}\E_Q[T]=\E_QZ-\mu^2/\bar T$, so
\begin{equation}
K_{ij}-\tfrac{\bar T}2\Dstep=r_*+\tfrac12(\E_iZ-\E_QZ)+\tfrac{\mu^2}{2\bar T}.
\end{equation}
Lemma~\ref{lem:bounded} with $0\le Z\le\Tmax\varepsilon^2$ bounds the second term by $\frac12\lambda\Tmax\varepsilon^2$, and the triangle inequality gives \eqref{eq:step-bound}.

\paragraph{Trajectory version.}
Let $k_t=\E_i[d_t\mid\mathcal F_t]$. For $t<T$, $k_t=\KL(\pi_i(\cdot\mid h_t)\Vert\pi_j(\cdot\mid h_t))$, and for $t\ge T$, $k_t=0$; by Lemma~\ref{lem:local}(iii), $0\le k_t\le\mathbf 1\{t<T\}\,\varepsilon^2/2$. For $s<t$, $d_s$ is $\mathcal F_t$-measurable, so $\E_i[d_sd_t]=\E_i[d_sk_t]$; together with $\abs{d_s}\le\varepsilon$,
\begin{equation}
\Bigl|\textstyle\sum_{s<t}\E_i[d_sd_t]\Bigr|\le\tfrac{\varepsilon^3}2\E_i\bigl[\tbinom T2\bigr]=\tfrac{\varepsilon^3}4\E_i[T(T-1)].
\end{equation}
Since $X^2=Z+2\sum_{s<t}d_sd_t$, we have $\frac12\E_iZ=\frac12\E_iX^2-\sum_{s<t}\E_i[d_sd_t]$; combining this with \eqref{eq:step-expansion} and $\Dtraj=\E_QX^2-\mu^2$ gives
\begin{multline}
K_{ij}-\tfrac12\Dtraj=r_*-\textstyle\sum_{s<t}\E_i[d_sd_t]\\+\tfrac12(\E_iX^2-\E_QX^2)+\tfrac{\mu^2}2.
\end{multline}
Since $0\le X^2\le\Tmax^2\varepsilon^2$, Lemma~\ref{lem:bounded} bounds the third term by $\frac12\lambda\Tmax^2\varepsilon^2$, giving \eqref{eq:trajectory-bound}. The proof assumes neither independence across steps nor independence between the episode length and the actions.

\paragraph{Symmetric KL.}
Swapping $(i,j)$ maps $d_t\mapsto-d_t$, $X\mapsto-X$, and $\mu\mapsto-\mu$, and leaves $Z$, $X^2$, $\bar T$, and hence $\Dstep$ and $\Dtraj$ unchanged (the same $Q$ is used in both directions). The proof holds with $\lambda=\TV(P_i,Q)$, and the bound for $(j,i)$ replaces $\E_i[T]$ and $\TV(P_i,Q)$ by $\E_j[T]$ and $\TV(P_j,Q)$. Averaging the two bounds gives, with $\lambda_{\mathrm{sym}}:=\{\TV(P_i,Q)+\TV(P_j,Q)\}/2\le\lambda$,
\begin{align}
\bigl|K_{\mathrm{sym}}-\tfrac{\bar T}2\Dstep\bigr|&\le\tfrac{\varepsilon^3}{12}(\E_i[T]+\E_j[T])\notag\\
&\quad+\tfrac{\lambda_{\mathrm{sym}}\Tmax\varepsilon^2}2+\tfrac{\mu^2}{2\bar T},
\end{align}
and the trajectory version is analogous. \qed

\subsection{Proof of Corollary~\ref{thm:sample}}
\label{app:finite}
Both $\hat K_S$ and $\hat K_\tau$ are functions of four sample means over the reference trajectories. We bound these means simultaneously by Hoeffding's inequality, propagate the errors to the two distances, and add the approximation error of Theorem~\ref{thm:approx}. As in Appendix~\ref{app:proofs}, Assumption~1 is assumed on the supports of both $P_i$ and $Q$.

\paragraph{The estimators as sample means.}
For each reference trajectory $\tau_r$, let $X_r=\sum_{t<T_r}d_{rt}$ and $Z_r=\sum_{t<T_r}d_{rt}^2$, let an overline denote the sample mean over $r$, and let $\hat{\bar T}=N^{-1}\sum_rT_r$. The difference of the two agents' centering means is $\hat c=\overline X/\hat{\bar T}$, so the distances in \S\ref{sec:vectors} are
\begin{equation}
\hat{\bar T}\hat D_S^2=\overline Z-\overline X^2/\hat{\bar T},\qquad
\hat D_\tau^2=\overline{X^2}-\overline X^2.
\label{eq:sample-exact}
\end{equation}
Likewise, in the population,
\begin{equation}
\bar T\Dstep=\E_QZ-\mu^2/\bar T,\qquad\Dtraj=\E_QX^2-\mu^2
\label{eq:pop-exact}
\end{equation}
(Appendix~\ref{app:proofs}). It thus suffices to bound the sample means of $F\in\{Z,X,X^2,T\}$ simultaneously.

\paragraph{Simultaneous bound on the four sample means.}
Since $\abs{d_t}\le\varepsilon$ and $1\le T\le\Tmax$, the statistics have range widths
\begin{align}
w_Z&=\Tmax\varepsilon^2, & w_X&=2\Tmax\varepsilon,\notag\\
w_{X^2}&=\Tmax^2\varepsilon^2, & w_T&=\Tmax-1.
\label{eq:ranges}
\end{align}
The independent unit is a whole reference trajectory; steps within a trajectory are not treated as independent samples. As the reference trajectories are i.i.d., Hoeffding's inequality gives, for each $F$ with probability at least $1-\delta/4$,
\begin{equation}
\abs{\overline F-\E_QF}\le b_F:=w_Fu_N,
\label{eq:hoeffding}
\end{equation}
with $u_N=\sqrt{\log(8/\delta)/(2N)}$ as in Corollary~\ref{thm:sample}. By the union bound, all four hold with probability at least $1-\delta$, without independence among the statistics. We work on this event.

\paragraph{Step version.}
By \eqref{eq:sample-exact} and \eqref{eq:pop-exact},
\begin{equation}
\hat{\bar T}\hat D_S^2-\bar T\Dstep=(\overline Z-\E_QZ)-\Bigl(\frac{\overline X^2}{\hat{\bar T}}-\frac{\mu^2}{\bar T}\Bigr).
\end{equation}
The second term decomposes by the identity
\begin{align}
\frac{\overline X^2}{\hat{\bar T}}-\frac{\mu^2}{\bar T}
&=\Bigl(\frac{\overline X}{\hat{\bar T}}+\frac\mu{\bar T}\Bigr)(\overline X-\mu)\notag\\
&\quad-\frac{\overline X\mu}{\hat{\bar T}\bar T}(\hat{\bar T}-\bar T).
\end{align}
Since $\abs{\overline X}\le\varepsilon\hat{\bar T}$ and $\abs\mu\le\varepsilon\bar T$, the two coefficients are at most $2\varepsilon$ and $\varepsilon^2$, so \eqref{eq:hoeffding} gives
\begin{equation}
\abs{\hat{\bar T}\hat D_S^2-\bar T\Dstep}\le b_Z+2\varepsilon b_X+\varepsilon^2b_T.
\label{eq:sample-step}
\end{equation}

\paragraph{Trajectory version.}
Similarly, $\hat D_\tau^2-\Dtraj=(\overline{X^2}-\E_QX^2)-(\overline X^2-\mu^2)$, and $\abs{\overline X^2-\mu^2}=\abs{\overline X+\mu}\,\abs{\overline X-\mu}\le2\Tmax\varepsilon b_X$, so
\begin{equation}
\abs{\hat D_\tau^2-\Dtraj}\le b_{X^2}+2\Tmax\varepsilon b_X.
\label{eq:sample-traj}
\end{equation}

\paragraph{Conclusion.}
Since $\hat K_S=\frac12\hat{\bar T}\hat D_S^2$ and $\hat K_\tau=\frac12\hat D_\tau^2$, the triangle inequality, Theorem~\ref{thm:approx}, \eqref{eq:sample-step}, and \eqref{eq:sample-traj} give $\abs{K_{ij}-\hat K_S}\le R_S+B_Su_N$ and $\abs{K_{ij}-\hat K_\tau}\le R_\tau+B_\tau u_N$. Substituting \eqref{eq:ranges},
\begin{align}
B_S&=\tfrac12w_Z+\varepsilon w_X+\tfrac12\varepsilon^2w_T=\tfrac{6\Tmax-1}2\varepsilon^2,\notag\\
B_\tau&=\tfrac12w_{X^2}+\Tmax\varepsilon w_X=\tfrac52\Tmax^2\varepsilon^2.
\end{align}
Finally, $\E_i[T]\le\Tmax$, $\E_i[T^2]\le\Tmax^2$, $\frac{\varepsilon}6\le\frac{\varepsilon}4$, $\frac{6\Tmax-1}2\le3\Tmax$, and $\frac52\le3$ give
\begin{align*}
R_S+B_Su_N&\le\Tmax\varepsilon^2\bigl(\tfrac{\varepsilon}4+\tfrac\lambda2+3u_N\bigr)+\tfrac{\mu^2}{2\bar T},\\
R_\tau+B_\tau u_N&\le\Tmax^2\varepsilon^2\bigl(\tfrac{\varepsilon}4+\tfrac\lambda2+3u_N\bigr)+\tfrac{\mu^2}2,
\end{align*}
which is Corollary~\ref{thm:sample}. Swapping $(i,j)$ leaves $Z$, $X^2$, $T$, and $\abs{\overline X-\mu}$ unchanged, so the same bound holds for $K_{\mathrm{sym}}$. \qed

\subsection{Cross terms in the trajectory distance}
\label{app:cors}

The step distance squares the step log-ratios before summing them; the trajectory distance sums them first. The following identity shows exactly what this adds. Let $c=\mu/\bar T$, $\tilde d_t=d_t-c$ for $t<T$, and $S=\sum_{t<T}\tilde d_t$. Then
\begin{multline}
\Dtraj=\bar T\Dstep+2\E_Q\textstyle\sum_{s<t<T}\tilde d_s\tilde d_t\\
+2c\,\mathrm{Cov}_Q(S,T)+c^2\mathrm{Var}_Q(T).
\label{eq:cross}
\end{multline}
The right-hand side consists of the step distance, the cross term between time steps, and two terms due to variable trajectory length. The cross term has up to $\binom T2$ summands per trajectory and accounts for the extra factor of $T$ in Corollary~\ref{thm:sample}.

\begin{proof}
From $X=\sum_{t<T}d_t=S+cT$ and $\mu=c\bar T$,
\begin{equation}
X-\mu=S+c(T-\bar T).
\label{eq:cross-decomp}
\end{equation}
Since $\Dtraj=\mathrm{Var}_Q(X)=\E_Q(X-\mu)^2$, squaring \eqref{eq:cross-decomp} and averaging over $Q$ gives
\begin{equation}
\begin{split}
\Dtraj=\E_QS^2&+2c\,\E_Q[S(T-\bar T)]\\
&+c^2\E_Q(T-\bar T)^2.
\end{split}
\label{eq:cross-square}
\end{equation}
Here $\E_QS=\E_QX-c\,\E_Q[T]=\mu-\mu=0$, so $\E_Q[S(T-\bar T)]=\mathrm{Cov}_Q(S,T)$ and $\E_Q(T-\bar T)^2=\mathrm{Var}_Q(T)$. The first term expands as
\begin{equation}
\E_QS^2=\E_Q\textstyle\sum_{t<T}\tilde d_t^2+2\E_Q\sum_{s<t<T}\tilde d_s\tilde d_t,
\end{equation}
and by the definition of $\Dstep$ the first term on the right equals $\bar T\Dstep$. Substituting these into \eqref{eq:cross-square} gives \eqref{eq:cross}.
\end{proof}

For example, suppose each reference trajectory has a constant step log-ratio $d_t=b$, where $b$ varies across trajectories with $\E_Qb=0$ and is independent of $T$. Then $\mu=c=0$, $\Dstep=\E_Q[Tb^2]/\bar T=\mathrm{Var}_Q(b)$, and $\Dtraj=\E_Q[T^2]\,\mathrm{Var}_Q(b)$: with strongly correlated steps, the trajectory estimate $\frac12\Dtraj$ is $\E_Q[T^2]/\bar T$ times the step estimate $\frac{\bar T}2\Dstep$. Since the cross term can also be negative, the trajectory version does not always overestimate.

\subsection{Uncentered variants}
\label{app:uncentered}

The uncentered embeddings are $z^S_{i,u}=M^{-1/2}(\ell_{i,rt})_{r,t}$ and $z^\tau_{i,u}=N^{-1/2}(A_{i,r})_r$, with distances $\hat D_{S,u}$ and $\hat D_{\tau,u}$. In the notation of Appendix~\ref{app:finite},
\begin{equation}
\hat{\bar T}\hat D_{S,u}^2=\overline Z,\qquad\hat D_{\tau,u}^2=\overline{X^2},
\label{eq:uncentered-sample}
\end{equation}
and the population distances are $\bar TD_{S,u}^2=\E_QZ$ and $D_{\tau,u}^2=\E_QX^2$. The estimators are $\hat K_{S,u}=\frac12\hat{\bar T}\hat D_{S,u}^2=\frac12\overline Z$ and $\hat K_{\tau,u}=\frac12\hat D_{\tau,u}^2=\frac12\overline{X^2}$. They differ from the centered versions only by the $\mu^2$ terms in \eqref{eq:pop-exact}.

\begin{proposition}[Error of the uncentered variants]
\label{prop:uncentered}
Under the assumptions of Theorem~\ref{thm:approx}, $\abs{K_{ij}-\frac{\bar T}2D_{S,u}^2}\le R_{S,u}$ and $\abs{K_{ij}-\frac12D_{\tau,u}^2}\le R_{\tau,u}$, where
\begin{align}
R_{S,u}&=\tfrac{\varepsilon^3}{6}\E_i[T]+\tfrac{\lambda\Tmax\varepsilon^2}{2},\label{eq:step-bound-u}\\
R_{\tau,u}&=\tfrac{\varepsilon^3}{6}\E_i[T]+\tfrac{\varepsilon^3}{4}\E_i[T(T-1)]\notag\\
&\quad+\tfrac{\lambda\Tmax^2\varepsilon^2}{2}.\label{eq:trajectory-bound-u}
\end{align}
Furthermore, under the assumptions of Corollary~\ref{thm:sample}, with $v_N=\sqrt{\log(4/\delta)/(2N)}$, with probability at least $1-\delta$ simultaneously,
\begin{align}
\abs{K_{ij}-\hat K_{S,u}}&\le R_{S,u}+\tfrac12\Tmax\varepsilon^2v_N,\notag\\
\abs{K_{ij}-\hat K_{\tau,u}}&\le R_{\tau,u}+\tfrac12\Tmax^2\varepsilon^2v_N.
\label{eq:sample-u}
\end{align}
\end{proposition}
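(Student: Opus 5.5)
The plan is to reuse the proof of Theorem~\ref{thm:approx} and simply drop the centering terms. In the notation of Appendix~\ref{app:proofs}, the population uncentered distances satisfy $\bar T D_{S,u}^2=\E_QZ$ and $D_{\tau,u}^2=\E_QX^2$.

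For the step version, \eqref{eq:step-expansion} gives $K_{ij}=\tfrac12\E_iZ+r_*$ with $\abs{r_*}\le\tfrac{\varepsilon^3}6\E_i[T]$. Hence
\begin{equation*}
K_{ij}-\tfrac{\bar T}2D_{S,u}^2=r_*+\tfrac12(\E_iZ-\E_QZ),
\end{equation*}
with no $\mu^2/(2\bar T)$ term. Lemma~\ref{lem:bounded}, applied with $0\le Z\le\Tmax\varepsilon^2$ and $\TV(P_i,Q)\le\lambda$, bounds the second term by $\tfrac12\lambda\Tmax\varepsilon^2$, which gives \eqref{eq:step-bound-u}.

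For the trajectory version, we follow the same route with the identity $\tfrac12\E_iZ=\tfrac12\E_iX^2-\sum_{s<t}\E_i[d_sd_t]$. This gives
\begin{equation*}
K_{ij}-\tfrac12D_{\tau,u}^2=r_*-\textstyle\sum_{s<t}\E_i[d_sd_t]+\tfrac12(\E_iX^2-\E_QX^2).
\end{equation*}
The cross-term bound $\tfrac{\varepsilon^3}4\E_i[T(T-1)]$ carries over verbatim from the martingale argument with $k_t$. Lemma~\ref{lem:bounded}, applied with $0\le X^2\le\Tmax^2\varepsilon^2$, bounds the last term by $\tfrac12\lambda\Tmax^2\varepsilon^2$, which gives \eqref{eq:trajectory-bound-u}.

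For the sampling part, \eqref{eq:uncentered-sample} shows that $\hat K_{S,u}=\tfrac12\overline Z$ and $\hat K_{\tau,u}=\tfrac12\overline{X^2}$. Each is half of a single sample mean of an i.i.d.\ bounded statistic, with range widths $w_Z=\Tmax\varepsilon^2$ and $w_{X^2}=\Tmax^2\varepsilon^2$ from \eqref{eq:ranges}. No ratio or product decomposition is needed, unlike in Appendix~\ref{app:finite}. We apply Hoeffding's inequality to $Z$ and to $X^2$, each at confidence level $1-\delta/2$. The two-sided deviation is then $w_F\sqrt{\log(4/\delta)/(2N)}=w_Fv_N$, and a union bound over the two statistics gives the joint event with probability at least $1-\delta$. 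On that event, $\abs{\hat K_{S,u}-\tfrac{\bar T}2D_{S,u}^2}\le\tfrac12\Tmax\varepsilon^2v_N$ and $\abs{\hat K_{\tau,u}-\tfrac12D_{\tau,u}^2}\le\tfrac12\Tmax^2\varepsilon^2v_N$. The triangle inequality with the population bounds then gives \eqref{eq:sample-u}.

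No step should be a real obstacle. The only points to check are, first, that the $\mu$ terms enter the centered proof only through the centering identity \eqref{eq:pop-exact}, so deleting them leaves every other step valid. Second, Hoeffding's per-statistic failure probability must be split over two statistics rather than four, which is why $v_N$ uses $\log(4/\delta)$ where $u_N$ uses $\log(8/\delta)$.
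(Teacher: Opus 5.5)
Your proposal is correct and follows the paper's proof essentially line for line. You drop the centering terms from the population identities and reuse \eqref{eq:step-expansion}, the $k_t$ cross-term bound, and Lemma~\ref{lem:bounded}; for the sampling part you apply Hoeffding's inequality to the two single sample means $\overline Z$ and $\overline{X^2}$ at level $\delta/2$ each, with a union bound. This is exactly how the paper obtains the $\log(4/\delta)$ in $v_N$.
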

\begin{proof}
The approximation error follows Appendix~\ref{app:proofs}. For the step version, \eqref{eq:step-expansion} gives
\begin{align}
K_{ij}-\tfrac{\bar T}2D_{S,u}^2&=K_{ij}-\tfrac12\E_QZ\notag\\
&=r_*+\tfrac12(\E_iZ-\E_QZ)
\end{align}
with $\abs{r_*}\le\frac{\varepsilon^3}6\E_i[T]$, and Lemma~\ref{lem:bounded} bounds the second term by $\frac12\lambda\Tmax\varepsilon^2$, giving \eqref{eq:step-bound-u}. For the trajectory version, $\frac12\E_iZ=\frac12\E_iX^2-\sum_{s<t}\E_i[d_sd_t]$ gives
\begin{multline}
K_{ij}-\tfrac12D_{\tau,u}^2=r_*-\textstyle\sum_{s<t}\E_i[d_sd_t]\\+\tfrac12(\E_iX^2-\E_QX^2).
\end{multline}
The cross term is at most $\frac{\varepsilon^3}4\E_i[T(T-1)]$ by Lemma~\ref{lem:local}(iii), and the third term at most $\frac12\lambda\Tmax^2\varepsilon^2$ by Lemma~\ref{lem:bounded}, giving \eqref{eq:trajectory-bound-u}. Both bounds are the centered ones without the centering terms $\mu^2/(2\bar T)$ and $\mu^2/2$.

For the sampling error, $\hat{\bar T}$ cancels in \eqref{eq:uncentered-sample}, so each version depends on a single sample mean, $\overline Z$ or $\overline{X^2}$. By Hoeffding's inequality, $\abs{\overline Z-\E_QZ}\le w_Zv_N$ and $\abs{\overline{X^2}-\E_QX^2}\le w_{X^2}v_N$ each hold with probability at least $1-\delta/2$, and both together with probability at least $1-\delta$. The triangle inequality then gives \eqref{eq:sample-u}. Since swapping $(i,j)$ leaves $Z$ and $X^2$ unchanged, the bounds also hold for the symmetric KL.
\end{proof}

\section{Mapping and predicting: details and robustness}
\label{app:mapping}
This appendix supplements \S\ref{sec:mapping}. Appendix~\ref{app:base100} describes the setup of the two experiments, and Appendices~\ref{app:centering}, \ref{app:predict}, and \ref{app:behaviour} give details of the trajectory-KL comparison (\S\ref{sec:klfit}), performance prediction (\S\ref{sec:predicting}), and behavior (\S\ref{sec:behaviour}).

\subsection{Setup of the two experiments}
\label{app:base100}
\label{app:klcontrols}
\begin{table*}[!t]
  \centering\small
  \caption{The two experiments of \S\ref{sec:mapping}. Both fix the BASE harness. The multi-harness experiment of \S\ref{sec:harness} is in Table~\ref{tab:hsetup}.}
  \label{tab:setups}
  \setlength{\tabcolsep}{5pt}
  \renewcommand{\arraystretch}{1.2}
  \begin{tabularx}{\textwidth}{@{}>{\raggedright\arraybackslash}p{8.2em}>{\raggedright\arraybackslash}X>{\raggedright\arraybackslash}X@{}}
    \toprule
    & \textbf{Trajectory KL} (\S\ref{sec:klfit}) & \textbf{100 LLMs} (\S\ref{sec:predicting}, \ref{sec:behaviour}) \\
    \midrule
    LLMs & 8 & 100 \\
    Reference trajectories & Each model's own rollouts on the 988 of 1{,}000 ALFWorld train tasks completed by all 8 models (7{,}904 trajectories, 187{,}961 steps) & Rollouts of Qwen3.8-27B and Gemma4-E4B on 250 ALFWorld train tasks (500 trajectories, 10{,}388 steps) \\
    Reference distribution $Q$ & Equal-weight pool of the 8 models & Equal-weight mixture of the 2 generators \\
    Scoring & Grammar-constrained, canonical tokens; every model scores every trajectory ($8\times8$) & No grammar mask; each model's chat template \\
    What is compared & Directly measured $K_{\mathrm{sym}}$ (28 pairs) & Success rates and action sequences on ALFWorld valid\_unseen, TextWorld, BFCL, MiniWoB++ (50 tasks each; behavior on BFCL uses multi\_turn 25); one run per task at temperature 0.7 \\
    \bottomrule
  \end{tabularx}
\end{table*}
Table~\ref{tab:setups} summarizes the two experiments of \S\ref{sec:mapping}; the multi-harness experiment of \S\ref{sec:harness} is summarized in Table~\ref{tab:hsetup}.
\paragraph{Models.}
We use 100 open-weight LLMs, all with the BASE harness on ALFWorld; the full list with parameter counts $|\theta_i|$ is in Table~\ref{tab:models}; parameter counts are total counts, including for MoE models, and the size baselines use their logarithm. Model families, used only as a baseline in \S\ref{sec:predicting}--\ref{sec:behaviour}, are assigned from the model name: 19 families, the largest being Qwen with 20 models.

\paragraph{Effect of the grammar mask.}
The 100-model experiment scores without the grammar mask of Appendix~\ref{app:probability}. To check that this choice does not change the map, we use the scoring runs of ten LLMs (Qwen3-8B, Qwen3.8-27B, Qwen3.5-2B/9B, Gemma4-E4B/12B, GPT-OSS-20B, GLM-4.7-Flash, Devstral-Small-2-24B, Granite-3.3-2B) that recorded both the masked and the raw log-likelihood of each reference action. Six of them (Qwen3-8B, Qwen3.8-27B, Qwen3.5-9B, Gemma4-E4B, GPT-OSS-20B, GLM-4.7-Flash) share 3{,}412 reference steps with both scores. On these steps, the mask changes the mean log-likelihood per action by $-0.01$ to $+0.03$ nats, the per-step values correlate at $r\ge0.99$, and the pairwise squared distances between the six models correlate at $1.000$ with the same ordering of all pairs, with and without centering. For each of the ten LLMs separately, we then replace its raw scores in the 100-model map by the masked scores on the steps where both exist (591--10{,}304 of the 10{,}388 steps, depending on the LLM). This moves its centered coordinate by $0.05$--$0.21$ times the distance to its nearest neighbor (median $0.07$ over the ten LLMs); the nearest neighbor is unchanged for all ten, the five nearest neighbors for nine (four of five for Granite-3.3-2B), and the rank correlation of the distances to the other 99 models is at least $0.997$.

\paragraph{Definition of the trajectory KL.}
The eight models are Qwen3-8B, Gemma4-E4B, GPT-OSS-20B, Qwen3.8-27B, GPT-OSS-120B, GLM-4.7-Flash, Qwen3.5-9B, and Gemma4-12B. $K_{ij}$ is the sample mean of $\log P_i/P_j$ over the trajectories generated by $i$, and $K_{\mathrm{sym}}$ averages the two directions. Negative log-ratios would be kept, but none occur in this set. Since the reference is the equal-weight pool of the eight models, $dP_i/dQ\le8$ by construction.

\paragraph{Compute and reproducibility.}
All likelihood computations and rollouts were run on ten NVIDIA RTX PRO 6000 Blackwell GPUs (96\,GB each) served through vLLM. The complete set of experiments, including the scoring of 10{,}388 reference steps by 100 LLMs, the downstream rollouts of 100 LLMs on four benchmarks, and the $12\times12$ harness experiment, took approximately two weeks of wall-clock time on this cluster; we did not record per-experiment GPU hours. Each downstream task was run once per agent (Table~\ref{tab:setups}), and the hyperparameters of the predictors are listed in Appendix~\ref{app:predict} and those of the harnesses in Appendix~\ref{app:harnesssetup}. Figure~\ref{fig:map}(a) is a t-SNE layout~\citep{vandermaaten2008tsne} of the centered coordinates with perplexity 15; all neighbors in the paper are computed in the original coordinate space. The $\tau$-bench and SWE-Bench reference trajectories are the publicly released HAL traces~\citep{kapoor2025hal}.
\begin{table*}[!htbp]
  \centering\scriptsize
  \caption{The 100 open-weight LLMs of \S\ref{sec:predicting}--\ref{sec:behaviour}, identified by their Hugging Face repository, with parameter counts in billions (total parameters for MoE models) and the family labels used as a baseline. $\diamond$: the eight models of the trajectory-KL comparison (\S\ref{sec:klfit}); $\star$: the twelve LLMs of the harness experiment (\S\ref{sec:harness}).}
  \label{tab:models}
  \setlength{\tabcolsep}{3pt}
  \renewcommand{\arraystretch}{1.02}
  \begin{tabular}{@{}l r l @{\hspace{14pt}} l r l@{}}
    \toprule
    Model & B & Family & Model & B & Family \\
    \midrule
    CohereForAI/c4ai-command-r7b-12-2024 & 8.0 & Cohere & allenai/Llama-3.1-Tulu-3-8B-SFT & 8.0 & Llama \\
    deepseek-ai/deepseek-coder-1.3b-instruct & 1.3 & DeepSeek & codellama/CodeLlama-13b-Instruct-hf & 13.0 & Llama \\
    deepseek-ai/deepseek-coder-6.7b-instruct & 6.7 & DeepSeek & codellama/CodeLlama-34b-Instruct-hf & 33.7 & Llama \\
    deepseek-ai/DeepSeek-Coder-V2-Lite-Instruct & 15.7 & DeepSeek & mistralai/Ministral-3-3B-Instruct-2512 & 3.8 & Mistral \\
    deepseek-ai/DeepSeek-V2-Lite-Chat & 15.7 & DeepSeek & mistralai/Mistral-7B-Instruct-v0.2 & 7.2 & Mistral \\
    deepseek-ai/deepseek-coder-33b-instruct & 33.3 & DeepSeek & mistralai/Mistral-7B-Instruct-v0.3 & 7.2 & Mistral \\
    LGAI-EXAONE/EXAONE-3.5-2.4B-Instruct & 2.4 & EXAONE & mistralai/Ministral-3-8B-Instruct-2512 & 8.9 & Mistral \\
    LGAI-EXAONE/EXAONE-3.5-7.8B-Instruct & 7.8 & EXAONE & mistralai/Mistral-Nemo-Instruct-2407 & 12.2 & Mistral \\
    LGAI-EXAONE/EXAONE-3.5-32B-Instruct & 32.0 & EXAONE & mistralai/Ministral-3-14B-Instruct-2512 & 13.9 & Mistral \\
    tiiuae/Falcon-H1-1.5B-Instruct & 1.6 & Falcon & mistralai/Mistral-Small-24B-Instruct-2501 & 23.6 & Mistral \\
    tiiuae/Falcon3-3B-Instruct & 3.2 & Falcon & mistralai/Devstral-Small-2505 & 23.6 & Mistral \\
    tiiuae/Falcon3-7B-Instruct & 7.5 & Falcon & mistralai/Mistral-Small-3.1-24B-Instruct-2503 & 24.0 & Mistral \\
    tiiuae/Falcon-H1-7B-Instruct & 7.6 & Falcon & mistralai/Devstral-Small-2-24B-Instruct-2512$^{\star}$ & 24.0 & Mistral \\
    tiiuae/Falcon3-10B-Instruct & 10.3 & Falcon & nvidia/NVIDIA-Nemotron-Nano-9B-v2 & 8.9 & Nemotron \\
    google/gemma-3-1b-it & 1.0 & Gemma & nvidia/NVIDIA-Nemotron-3-Nano-30B-A3B-BF16 & 31.6 & Nemotron \\
    google/gemma-3-4b-it & 4.3 & Gemma & allenai/Olmo-3-7B-Instruct & 7.3 & OLMo \\
    google/gemma-4-E2B-it & 5.1 & Gemma & allenai/Olmo-3-7B-Instruct-SFT & 7.3 & OLMo \\
    google/gemma-3n-E4B-it & 7.8 & Gemma & allenai/Olmo-3-7B-Instruct-DPO & 7.3 & OLMo \\
    google/gemma-4-E4B-it$^{\diamond\star}$ & 8.0 & Gemma & allenai/Olmo-3.1-32B-Instruct & 32.2 & OLMo \\
    google/gemma-4-12b-it$^{\diamond\star}$ & 12.0 & Gemma & microsoft/Phi-3-mini-128k-instruct & 3.8 & Phi \\
    google/gemma-3-12b-it & 12.2 & Gemma & microsoft/Phi-3.5-mini-instruct & 3.8 & Phi \\
    google/gemma-4-26B-A4B-it & 25.8 & Gemma & microsoft/Phi-4-mini-instruct & 3.8 & Phi \\
    google/gemma-3-27b-it & 27.4 & Gemma & microsoft/Phi-4-mini-reasoning & 3.8 & Phi \\
    google/gemma-4-31b-it$^{\star}$ & 31.3 & Gemma & microsoft/Phi-3-small-128k-instruct & 7.4 & Phi \\
    THUDM/glm-4-9b-chat & 9.4 & GLM & microsoft/Phi-3-medium-128k-instruct & 14.0 & Phi \\
    zai-org/GLM-4.7-Flash$^{\diamond\star}$ & 31.2 & GLM & microsoft/phi-4 & 14.7 & Phi \\
    openai/gpt-oss-20b$^{\diamond\star}$ & 20.9 & gpt-oss & Qwen/Qwen2.5-0.5B-Instruct & 0.5 & Qwen \\
    openai/gpt-oss-120b$^{\diamond\star}$ & 116.8 & gpt-oss & Qwen/Qwen3-0.6B & 0.8 & Qwen \\
    ibm-granite/granite-3.1-1b-a400m-instruct & 1.3 & Granite & Qwen/Qwen2.5-1.5B-Instruct & 1.5 & Qwen \\
    ibm-granite/granite-3.1-2b-instruct & 2.5 & Granite & Qwen/Qwen3-1.7B & 2.0 & Qwen \\
    ibm-granite/granite-3.3-2b-instruct & 2.5 & Granite & Qwen/Qwen3.5-2B$^{\star}$ & 2.3 & Qwen \\
    ibm-granite/granite-3.1-3b-a800m-instruct & 3.3 & Granite & Qwen/Qwen2.5-3B-Instruct & 3.1 & Qwen \\
    ibm-granite/granite-4.0-h-tiny & 6.9 & Granite & Qwen/Qwen3-4B & 4.0 & Qwen \\
    ibm-granite/granite-3.1-8b-instruct & 8.2 & Granite & Qwen/Qwen3.5-4B & 4.7 & Qwen \\
    ibm-granite/granite-3.3-8b-instruct & 8.2 & Granite & Qwen/Qwen2.5-7B-Instruct & 7.6 & Qwen \\
    tencent/Hunyuan-0.5B-Instruct & 0.5 & Hunyuan & Qwen/Qwen3-8B$^{\diamond\star}$ & 8.2 & Qwen \\
    tencent/Hunyuan-1.8B-Instruct & 1.8 & Hunyuan & Qwen/Qwen3.5-9B$^{\diamond\star}$ & 9.7 & Qwen \\
    tencent/Hunyuan-7B-Instruct & 7.5 & Hunyuan & Qwen/Qwen3-14B & 14.8 & Qwen \\
    internlm/internlm2\_5-1\_8b-chat & 1.9 & InternLM & Qwen/Qwen2.5-14B-Instruct & 14.8 & Qwen \\
    internlm/internlm2\_5-7b-chat & 7.7 & InternLM & Qwen/Qwen2.5-Coder-14B-Instruct & 14.8 & Qwen \\
    internlm/internlm3-8b-instruct & 8.8 & InternLM & Qwen/Qwen3.8-27B$^{\diamond\star}$ & 27.8 & Qwen \\
    LiquidAI/LFM2.5-8B-A1B & 8.5 & LFM & Qwen/Qwen3.5-27B$^{\star}$ & 27.8 & Qwen \\
    meta-llama/Llama-3.2-1B-Instruct & 1.2 & Llama & Qwen/Qwen3-30B-A3B & 30.5 & Qwen \\
    meta-llama/Llama-3.2-3B-Instruct & 3.2 & Llama & Qwen/Qwen3-32B & 32.8 & Qwen \\
    codellama/CodeLlama-7b-Instruct-hf & 6.7 & Llama & Qwen/Qwen2.5-32B-Instruct & 32.8 & Qwen \\
    meta-llama/Llama-3.1-8B-Instruct & 8.0 & Llama & Qwen/Qwen3.6-35B-A3B & 36.0 & Qwen \\
    NousResearch/Hermes-3-Llama-3.1-8B & 8.0 & Llama & HuggingFaceTB/SmolLM3-3B & 3.1 & SmolLM \\
    nvidia/Llama-3.1-Nemotron-Nano-8B-v1 & 8.0 & Llama & 01-ai/Yi-Coder-1.5B-Chat & 1.5 & Yi \\
    allenai/Llama-3.1-Tulu-3-8B & 8.0 & Llama & 01-ai/Yi-1.5-9B-Chat-16K & 8.8 & Yi \\
    allenai/Llama-3.1-Tulu-3-8B-DPO & 8.0 & Llama & 01-ai/Yi-1.5-34B-Chat-16K & 34.4 & Yi \\
    \bottomrule
  \end{tabular}
\end{table*}

\subsection{Trajectory KL: centering, step vs.\ trajectory, and sample size}
\label{app:centering}
Using the eight-model KL data of \S\ref{sec:klfit}, we examine the representation choices fixed in \S\ref{sec:vectors} and the dependence on the number of reference trajectories.
\paragraph{Effect of centering.}
\begin{table}[!htbp]
  \centering\small
  \caption{Agreement between the estimates read from the map and the directly measured symmetric trajectory KL $K_{\mathrm{sym}}$: 8 models, 28 pairs, 988 shared ALFWorld train tasks. The estimates are $\tfrac{\bar T}{2}\hat D_S^2$ for the step version and $\tfrac12\hat D_\tau^2$ for the trajectory version. RMSE is in nats.}
  \label{tab:klfit}
  \setlength{\tabcolsep}{5pt}
  \begin{tabular}{@{}lccr@{}}
    \toprule
    & $r$ & $\rho$ & RMSE \\
    \midrule
    \multicolumn{4}{@{}l}{\textit{Step version}} \\
    \quad Centered & \textbf{0.90} & \textbf{0.95} & \textbf{2{,}056} \\
    \quad Uncentered & 0.83 & 0.92 & 3{,}773 \\
    \multicolumn{4}{@{}l}{\textit{Trajectory version}} \\
    \quad Centered & 0.86 & 0.94 & 29{,}792 \\
    \quad Uncentered & 0.77 & 0.92 & 72{,}014 \\
    \bottomrule
  \end{tabular}
\end{table}

Table~\ref{tab:klfit} compares agreement with the KL with and without centering. On these eight models, centering gives a slightly higher correlation and a smaller RMSE. In performance prediction (Table~\ref{tab:pred_full}), uncentered Ridge is about as good as or slightly better than centered Ridge, while for behavioral neighbors (Table~\ref{tab:behaviour_raw}) centering tends to give larger effects. Since neither choice wins on every measure, the main text reports centered coordinates throughout and the appendix also reports uncentered results.
\paragraph{Step version vs.\ trajectory version.}The step version agrees better with the KL in both correlation and RMSE (Table~\ref{tab:klfit}). Both overestimate the measured $K_{\mathrm{sym}}$, by a median factor of 7.5 for the step version and 74 for the trajectory version; the ratio of the two factors, about 10, is of the order of the mean reference trajectory length of 23.8 steps (Table~\ref{tab:setups}).
\paragraph{Distances on the 100-model map.}The eight models are among the 100 LLMs of \S\ref{sec:predicting}, so their 28 pairwise distances can also be read from the 100-model map, whose reference trajectories come from two of the eight models and are scored without the grammar mask. Against the same $K_{\mathrm{sym}}$, the centered step estimate $\tfrac{\bar T}{2}\hat D_S^2$ from that map gives $r=0.86$ and $\rho=0.88$ (uncentered $0.80$ / $0.92$; centered trajectory version $0.75$ / $0.89$), and the 28 estimates from the two embeddings have rank correlation $0.97$. The lower correlation reflects both the different reference distribution $Q$ and the scoring convention.
\paragraph{Dependence on the number of reference trajectories.}We resample $N$ reference trajectories from the pool of 7{,}904 without replacement, 50 times for each $N$ from 100 to 3{,}200, and recompute the estimates. The resampling standard deviation of the step estimate shrinks roughly as $N^{-1/2}$ (median coefficient of variation over the 28 pairs $0.074$ at $N=100$ and $0.010$ at $N=3{,}200$; log--log slope $-0.48$ after the finite-population correction, $-0.52$ for the trajectory version), consistent with Corollary~\ref{thm:sample}. The median ratio of the estimate to the measured $K_{\mathrm{sym}}$, by contrast, stays between $7.5$ and $7.6$ for the step version and between $72$ and $74$ for the trajectory version at every $N$: most of the gap to the measured KL is approximation error and does not shrink with $N$.

\subsection{Performance prediction: protocol and full results}
\label{app:predict}
This section gives the details of \S\ref{sec:predicting} and the results of all predictors. Models, reference trajectories, and scoring are described in Appendix~\ref{app:base100}.

\paragraph{Tasks and average success rates.}
A success rate is the fraction of tasks solved; run conditions are listed in Table~\ref{tab:setups}. No values are missing across the 4 benchmarks, 50 tasks each, and 100 models. The tasks are 50 from ALFWorld valid\_unseen, 25 each from TextWorld treasure hunter easy and medium, 25 each from BFCL simple\_python and multi\_turn, and 25 each from MiniWoB++ navigate-tree and form-sequence. The four-benchmark average is the success rate over all 200 tasks, and the average over the three other environments is the success rate over the 150 tasks of TextWorld, BFCL, and MiniWoB++.

\paragraph{Predictors and intervals.}
We hold out one model at a time and predict it from the remaining 99. kNN uses Euclidean distance in centered coordinates, and the number of neighbors $k$ is chosen from $\{1,3,5,10\}$ by inner leave-one-out on the 99 training models. Ridge uses a linear kernel, with the penalty $\alpha$ chosen from five values, $\{10^{-3},10^{-2},10^{-1},1,10\}$ times the trace scale of the centered kernel. Size plus coordinates is a Ridge regression with $\log_{10}|\theta_i|$ as an unpenalized covariate. All hyperparameters are chosen by inner leave-one-out on the 99 training models, and predictions are clipped to $[0,1]$. For differences from the size baseline, 95\% intervals are the 2.5th and 97.5th percentiles of 2{,}000 bootstrap resamples of models. We compute them for both the difference in Pearson $r$, shown in bold in the main-text table, and the difference in RMSE, shown by arrows in the appendix table.
\begin{figure*}[!htbp]
  \centering
  \includegraphics[width=\textwidth]{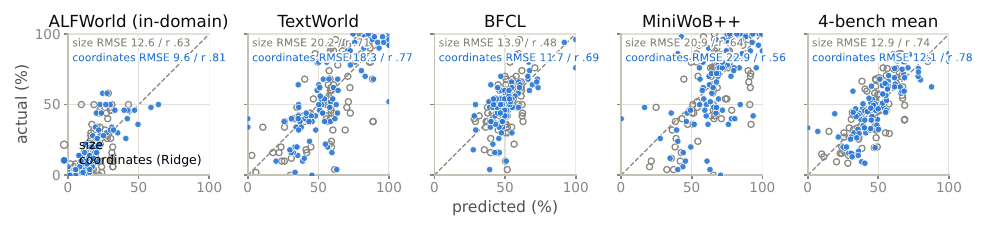}
  \caption{Predicting success rates without running the agents, with leave-one-model-out over 100 models. The horizontal axis is the prediction and the vertical axis the measured success rate. Open gray circles show the size baseline, a regression on $\log|\theta_i|$; blue points show coordinates alone, a Ridge regression on centered coordinates (the Ridge column of Table~\ref{tab:pred}). Panels from left: ALFWorld, TextWorld, BFCL, MiniWoB++, and the four-benchmark average.}
  \label{fig:pred}
\end{figure*}
\begin{table*}[!htbp]
  \centering\footnotesize
  \caption{All predictors for performance prediction, with leave-one-model-out over 100 models. Each cell gives RMSE in points / Pearson $r$. $\downarrow$ and $\uparrow$ mark a 95\% bootstrap interval of the RMSE difference from the size baseline that lies below or above zero. The training mean is a constant prediction, so no $r$ is shown.}
  \label{tab:pred_full}
  \setlength{\tabcolsep}{4pt}
  \renewcommand{\arraystretch}{1.1}
  \begin{tabular}{@{}l cccccc@{}}
    \toprule
    Predictor & ALFWorld & TextWorld & BFCL & MiniWoB++ & 4-bench avg. & 3 other envs. \\
    \midrule
    \multicolumn{7}{@{}l}{\textbf{Information available without running}} \\
    \quad Training mean & 16.4 / ---$^{\uparrow}$ & 29.0 / ---$^{\uparrow}$ & 16.1 / ---$^{\uparrow}$ & 27.5 / ---$^{\uparrow}$ & 19.2 / ---$^{\uparrow}$ & 21.4 / ---$^{\uparrow}$ \\
    \quad Size $\log|\theta_i|$ & 12.6 / 0.63 & 20.2 / 0.71 & 13.9 / 0.48 & 20.9 / 0.64 & 12.9 / 0.74 & 14.6 / 0.72 \\
    \quad Family mean & 14.7 / 0.46 & 26.8 / 0.44$^{\uparrow}$ & 15.5 / 0.34 & 27.6 / 0.24$^{\uparrow}$ & 17.7 / 0.42$^{\uparrow}$ & 19.9 / 0.40$^{\uparrow}$ \\
    \quad Mean log-likelihood $\bar\ell$ & 16.0 / 0.20$^{\uparrow}$ & 27.2 / 0.33$^{\uparrow}$ & 15.1 / 0.32 & 26.6 / 0.22$^{\uparrow}$ & 18.0 / 0.34$^{\uparrow}$ & 19.9 / 0.34$^{\uparrow}$ \\
    \quad Size + $\bar\ell$ & 12.4 / 0.65 & 19.2 / 0.74 & 13.4 / 0.54 & 20.7 / 0.65 & 12.2 / 0.77 & 13.8 / 0.76 \\
    \addlinespace[3pt]
    \multicolumn{7}{@{}l}{\textbf{Centered coordinates}} \\
    \quad kNN & 9.0 / 0.88$^{\downarrow}$ & 20.3 / 0.76 & 11.8 / 0.69$^{\downarrow}$ & 28.2 / 0.28$^{\uparrow}$ & 13.5 / 0.74 & 15.3 / 0.72 \\
    \quad Ridge & 9.6 / 0.81$^{\downarrow}$ & 18.3 / 0.77 & 11.7 / 0.69$^{\downarrow}$ & 22.9 / 0.56 & 12.1 / 0.78 & 14.0 / 0.75 \\
    \quad Size + Ridge & 9.4 / 0.82$^{\downarrow}$ & 18.0 / 0.78$^{\downarrow}$ & 12.1 / 0.65$^{\downarrow}$ & 21.1 / 0.63 & 10.9 / 0.83$^{\downarrow}$ & 12.5 / 0.81$^{\downarrow}$ \\
    \quad Size + $\bar\ell$ + Ridge & 9.2 / 0.83$^{\downarrow}$ & 17.1 / 0.80$^{\downarrow}$ & 11.0 / 0.72$^{\downarrow}$ & 20.3 / 0.67 & 10.3 / 0.84$^{\downarrow}$ & 12.3 / 0.81$^{\downarrow}$ \\
    \addlinespace[3pt]
    \multicolumn{7}{@{}l}{\textbf{Uncentered coordinates}} \\
    \quad kNN & 9.2 / 0.83$^{\downarrow}$ & 19.5 / 0.73 & 12.5 / 0.62 & 23.2 / 0.52 & 12.8 / 0.77 & 14.8 / 0.72 \\
    \quad Ridge & 9.0 / 0.83$^{\downarrow}$ & 19.4 / 0.74 & 11.0 / 0.72$^{\downarrow}$ & 22.8 / 0.55 & 12.1 / 0.78 & 14.2 / 0.75 \\
    \quad Size + Ridge & 8.9 / 0.84$^{\downarrow}$ & 16.5 / 0.82$^{\downarrow}$ & 10.9 / 0.73$^{\downarrow}$ & 20.2 / 0.67 & 10.3 / 0.85$^{\downarrow}$ & 11.6 / 0.84$^{\downarrow}$ \\
    \bottomrule
  \end{tabular}
\end{table*}

\paragraph{Results of all predictors.}
Table~\ref{tab:pred_full} and Figure~\ref{fig:pred} cover the training mean, size, family, $\bar\ell$, their combinations, and predictors on centered and uncentered coordinates. Two trends stand out. Adding $\bar\ell$ to size barely changes the predictions. Predictions from uncentered coordinates are about as good as those from centered coordinates. Using Spearman rank correlation instead of Pearson correlation does not change the ordering of the methods.

\subsection{Behavior: data, metrics, and additional results}
\label{app:behaviour}
This section defines the action sequences, distances, neighbors, and tests of \S\ref{sec:behaviour}, and reports measures, controls, and robustness checks not shown in the main-text table.

\subsubsection{Definitions}
\paragraph{Action sequences.}
We use one trajectory per model and task; run conditions are listed in Table~\ref{tab:setups}. For the 50 ALFWorld valid\_unseen tasks, the trajectories come from the same runs as the performance labels. An action sequence is the sequence of commands for the 50 TextWorld treasure-hunter tasks, the sequence of function names extracted from model outputs for the 25 BFCL multi\_turn tasks, and the sequence of action strings for the 50 MiniWoB++ tasks. The action type is the leading verb, function name, or tool name. An action is judged invalid by the environment's set of admissible actions in ALFWorld, by the environment's feedback that it could not parse the command in TextWorld, by a turn without a function call in BFCL, and by a tool error in MiniWoB++.

\paragraph{Behavioral distances.}
The four aspects in the main text are the base-2 Jensen--Shannon distance between action-type frequency distributions~\citep{endres2003metric}, the Levenshtein distance divided by the longer length~\citep{levenshtein1966,rabanser2026reliability}, the Jaccard distance between sets of action 2-grams, and the disagreement rate of the first action from the same initial state~\citep{bouchard2026trajectory,gonuguntla2026replay}. The first three are computed per task and averaged over the tasks, at least 10, recorded for both models. The appendix tables add four more measures. The divergence position is $1-k/\min(T_i,T_j)$, where $k$ is the number of steps before the first differing action. The conditional disagreement counts disagreement at step $t$ only for pairs whose action sequences agree up to step $t-1$, summed over $t=2,\dots,5$; only pairs with at least 10 such cases are used. For the first action, conditional disagreement, and divergence position, two invalid actions are not counted as agreeing. The profile distance is the Euclidean distance between per-model vectors standardized within each benchmark; each vector contains task averages of the repetition rate $\frac{T-|\text{distinct actions}|}{T-1}$~\citep{gioacchini2024agentquest}, the consecutive repetition rate, exploration breadth measured as distinct actions$/T$, the invalid-action rate, which is the complement of grounding accuracy~\citep{ma2024agentboard}, the share of information-gathering actions, the format-error rate, steps$/$limit, the rate of hitting the step limit, and the early-termination rate. For BFCL we also include the rate of empty turns. The success pattern is the fraction of tasks on which the two models' outcomes differ, $\frac1{|\mathcal T|}\sum_t\abs{s_{it}-s_{jt}}$.

\paragraph{Choosing neighbors and testing.}
Each model's five neighbors are the closest models in centered coordinates by Euclidean distance, in size by the difference in $\log|\theta|$, in $\bar\ell$ by the difference in mean log-likelihood on the reference trajectories, in family by taking models of the same family in order of size, and in success rate by the difference in measured success rate on that benchmark. For families with fewer than five other members, the remaining neighbors are taken from other families in order of size. The effect size is the mean behavioral distance to the remaining models minus the mean distance to the five neighbors, averaged over models and divided by the standard deviation of the behavioral distance over all pairs. Figure~\ref{fig:map}(c) applies the same definition to the \mbox{$k$-th} nearest model alone; ranks $k\le5$ are shown individually and larger ranks as bin averages. Significance is assessed by a two-sided Wilcoxon signed-rank test~\citep{wilcoxon1945} paired by model. The comparison between coordinate neighbors and success-rate neighbors, marked $\dagger$ in the main-text table, uses the same paired test: it asks whether the distance to success-rate neighbors minus the distance to coordinate neighbors, divided by the all-pairs standard deviation, is positive. These are paired tests with models as the unit; pairs share models and no correction for multiple comparisons is applied, so the significance markers are descriptive guides rather than exact error rates. The main conclusions rest on effects that are large (at least $0.5$ for coordinate neighbors in ALFWorld) and consistent across measures rather than on individual markers; small differences marked as significant, such as those between coordinate and success-rate neighbors in the other environments, should be read as suggestive.

\subsubsection{Robustness of the neighbor effect}
The main conclusion is that coordinate neighbors behave alike when run. We check that this does not depend on the choice of behavioral distance, on how performance is controlled, or on how the coordinates are built.

\paragraph{Does it hold for other behavioral distances?}
Part A of Table~\ref{tab:behaviour_full} adds the divergence position, conditional disagreement, profile, and success pattern to the four distances of the main text. The divergence position and conditional disagreement, which measure divergence early in the episode, follow the same pattern as the first action. For the profile, which aggregates repetition and invalid-action rates, and for the success pattern, neighbors chosen by success rate are closer than coordinate neighbors. These measures reflect how often an agent fails and are therefore tied closely to its success rate.

\paragraph{Does it survive regressing out performance?}
Instead of comparing with neighbors chosen by success rate, part B of Table~\ref{tab:behaviour_full} removes the effect of performance by regression. We regress each behavioral distance on the difference in success rate, the pair's mean success rate and its square, and the difference in $\log|\theta|$, and compute the effect of coordinate neighbors on the residuals. We control for the mean success rate as well because strong models converge on the shortest solutions and therefore act alike. For the success pattern, we instead subtract its expectation under a null distribution generated by the curveball algorithm~\citep{strona2014curveball}, which shuffles outcomes while keeping each model's success rate and each task's difficulty fixed. After this control, the effect of coordinate neighbors remains on every aspect in ALFWorld, the environment of the coordinates, and mainly on the local and early-step measures in the other environments.
\begin{table*}[!htbp]
  \centering\footnotesize
  \caption{Neighbor effect sizes for all behavioral distances, including those not in the main text; definitions as in Table~\ref{tab:behaviour}, 100 models. A: raw behavioral distances, comparing how neighbors are chosen; bold marks the largest value among the four neighbor types available without running. B: effect of coordinate neighbors on residuals after regressing out success rate and size. Unmarked values have $p<0.01$, $^{+}$ marks $p<0.05$, and gray values are not significant.}
  \label{tab:behaviour_full}
  \setlength{\tabcolsep}{4pt}
  \renewcommand{\arraystretch}{1.1}
  \begin{tabular}{@{}l ccccccccc@{}}
    \toprule
    & \multicolumn{2}{c}{Composition} & \multicolumn{1}{c}{Global} & \multicolumn{1}{c}{Local} & \multicolumn{3}{c}{Early steps} & \multicolumn{2}{c}{Outcome} \\
    \cmidrule(lr){2-3} \cmidrule(lr){4-4} \cmidrule(lr){5-5} \cmidrule(lr){6-8} \cmidrule(lr){9-10}
    & Type JS & Profile & Edit & 2-gram & First & Cond. & Diverg. & Success & Pattern \\
    \midrule
    \multicolumn{10}{@{}l}{\textbf{A}\quad Choice of neighbors} \\[1pt]
    \multicolumn{10}{@{}l}{\emph{ALFWorld}} \\
    \quad Coordinates & \textbf{0.52} & \textbf{0.63} & \textbf{0.88} & \textbf{0.96} & \textbf{0.61} & \textbf{0.51} & \textbf{0.70} & \textbf{0.60} & 0.32 \\
    \quad Size & 0.23 & 0.40 & 0.40 & 0.41 & 0.30 & 0.26 & 0.34 & 0.41 & 0.27 \\
    \quad Family & 0.28 & 0.48 & 0.62 & 0.68 & 0.49 & 0.43 & 0.62 & 0.49 & \textbf{0.35} \\
    \quad $\bar\ell$ & \textcolor{gray}{0.05} & 0.23 & 0.18$^{+}$ & 0.20 & 0.11 & 0.15 & 0.16$^{+}$ & 0.20 & 0.17 \\
    \quad Reference: success rate & 0.30 & 0.78 & 0.65 & 0.74 & 0.41 & 0.31 & 0.52 & --- & 0.61 \\
    \addlinespace[2pt]
    \multicolumn{10}{@{}l}{\emph{TextWorld}} \\
    \quad Coordinates & \textbf{0.33} & \textbf{0.52} & \textbf{0.52} & \textbf{0.54} & 0.21 & \textbf{0.51} & \textbf{0.43} & \textbf{0.61} & \textbf{0.65} \\
    \quad Size & \textcolor{gray}{0.10} & 0.35 & 0.25 & 0.27$^{+}$ & 0.14 & 0.32 & \textcolor{gray}{0.17} & 0.58 & 0.56 \\
    \quad Family & 0.27 & 0.47 & 0.42 & 0.44 & \textbf{0.25} & 0.46 & 0.34 & 0.56 & 0.61 \\
    \quad $\bar\ell$ & \textcolor{gray}{0.06} & 0.20 & \textcolor{gray}{0.14} & \textcolor{gray}{0.14} & \textcolor{gray}{0.04} & \textcolor{gray}{0.16} & \textcolor{gray}{0.10} & 0.27 & 0.29 \\
    \quad Reference: success rate & 0.22 & 0.80 & 0.50 & 0.52 & 0.16 & 0.47 & 0.31$^{+}$ & --- & 1.06 \\
    \addlinespace[2pt]
    \multicolumn{10}{@{}l}{\emph{BFCL}} \\
    \quad Coordinates & 0.18 & \textbf{0.36} & 0.26 & \textbf{0.38} & \textbf{0.32} & 0.22 & \textbf{0.46} & \textbf{0.47} & 0.28 \\
    \quad Size & 0.10$^{+}$ & 0.20 & 0.13 & 0.21 & 0.19 & 0.21 & 0.30 & 0.27 & 0.25 \\
    \quad Family & \textbf{0.21} & 0.29 & \textbf{0.26} & 0.32 & 0.25 & \textbf{0.28} & 0.40 & 0.30 & \textbf{0.29} \\
    \quad $\bar\ell$ & \textcolor{gray}{0.03} & 0.08 & \textcolor{gray}{0.06} & 0.09$^{+}$ & 0.09 & \textcolor{gray}{0.04} & 0.11$^{+}$ & 0.15 & 0.17 \\
    \quad Reference: success rate & 0.23 & 0.49 & 0.30 & 0.39 & 0.21 & 0.26 & 0.50 & --- & 0.66 \\
    \addlinespace[2pt]
    \multicolumn{10}{@{}l}{\emph{MiniWoB++}} \\
    \quad Coordinates & \textbf{0.24} & \textbf{0.34} & \textbf{0.38} & \textbf{0.50} & \textbf{0.50} & \textbf{0.11$^{+}$} & \textbf{0.48} & 0.39 & 0.40 \\
    \quad Size & 0.12 & 0.16 & 0.20 & 0.27 & 0.35 & \textcolor{gray}{$-$0.02} & 0.30 & \textbf{0.42} & \textbf{0.44} \\
    \quad Family & 0.22 & 0.23 & 0.32 & 0.47 & 0.35 & \textcolor{gray}{0.08} & 0.36 & 0.37 & 0.37 \\
    \quad $\bar\ell$ & 0.08 & 0.10 & 0.10$^{+}$ & 0.16$^{+}$ & 0.09$^{+}$ & 0.10$^{+}$ & 0.11$^{+}$ & 0.20 & 0.21 \\
    \quad Reference: success rate & 0.13 & 0.63 & 0.37 & 0.42 & 0.28 & 0.19$^{+}$ & 0.29 & --- & 0.98 \\
    \midrule
    \multicolumn{10}{@{}l}{\textbf{B}\quad Coordinate neighbors, residuals after removing success rate and size} \\[1pt]
    ALFWorld & 0.22 & 0.09$^{+}$ & 0.61 & 0.68 & 0.31 & 0.20 & 0.34 & --- & 0.24 \\
    TextWorld & 0.04$^{+}$ & 0.03$^{+}$ & 0.18 & 0.23 & $-$0.13 & 0.15 & 0.14 & --- & 0.22 \\
    BFCL & \textcolor{gray}{0.04} & 0.09 & 0.09 & 0.20 & 0.13 & 0.10$^{+}$ & 0.24 & --- & $-$0.15$^{+}$ \\
    MiniWoB++ & \textcolor{gray}{0.03} & 0.11 & \textcolor{gray}{0.09} & 0.31 & 0.31 & \textcolor{gray}{0.01} & 0.28 & --- & 0.20 \\
    \bottomrule
  \end{tabular}
\end{table*}

\paragraph{Does it depend on how the coordinates are built or on families?}
The top half of Table~\ref{tab:behaviour_raw} chooses neighbors in uncentered coordinates. The effect is weaker than with centered coordinates, presumably because the overall likelihood level then influences which models count as neighbors. The bottom half excludes models of the same family from the coordinate neighbors. The effect remains on all four benchmarks, so coordinate neighbors do not behave alike merely because they belong to the same family.
\begin{table*}[!htbp]
  \centering\footnotesize
  \caption{Variants of coordinate neighbors, with definitions as in Table~\ref{tab:behaviour_full}. Top: neighbors in uncentered coordinates. Bottom: coordinate neighbors restricted to models from other families. Neighbors in centered coordinates are in Table~\ref{tab:behaviour_full}. Unmarked values have $p<0.01$, $^{+}$ marks $p<0.05$, and gray values are not significant.}
  \label{tab:behaviour_raw}
  \label{tab:behaviour_xfam}
  \setlength{\tabcolsep}{4pt}
  \renewcommand{\arraystretch}{1.08}
  \begin{tabular}{@{}l ccccccccc@{}}
    \toprule
    & \multicolumn{2}{c}{Composition} & \multicolumn{1}{c}{Global} & \multicolumn{1}{c}{Local} & \multicolumn{3}{c}{Early steps} & \multicolumn{2}{c}{Outcome} \\
    \cmidrule(lr){2-3} \cmidrule(lr){4-4} \cmidrule(lr){5-5} \cmidrule(lr){6-8} \cmidrule(lr){9-10}
    & Type JS & Profile & Edit & 2-gram & First & Cond. & Diverg. & Success & Pattern \\
    \midrule
    \multicolumn{10}{@{}l}{\textbf{Neighbors in uncentered coordinates}} \\[1pt]
    \multicolumn{10}{@{}l}{\quad Raw behavioral distance} \\
    \qquad ALFWorld & 0.32 & 0.51 & 0.65 & 0.73 & 0.45 & 0.38 & 0.52 & 0.64 & 0.45 \\
    \qquad TextWorld & 0.19 & 0.49 & 0.37 & 0.40 & 0.12$^{+}$ & 0.39 & 0.27$^{+}$ & 0.66 & 0.68 \\
    \qquad BFCL & 0.12 & 0.31 & 0.21 & 0.31 & 0.22 & 0.22 & 0.45 & 0.47 & 0.35 \\
    \qquad MiniWoB++ & 0.14 & 0.28 & 0.25 & 0.40 & 0.38 & 0.12$^{+}$ & 0.34 & 0.38 & 0.38 \\
    \multicolumn{10}{@{}l}{\quad Residuals after removing success rate and size} \\
    \qquad ALFWorld & 0.10 & \textcolor{gray}{$-$0.03} & 0.47 & 0.54 & 0.21 & 0.14 & 0.28 & --- & 0.29 \\
    \qquad TextWorld & 0.06$^{+}$ & 0.11 & 0.15 & 0.19 & $-$0.06$^{+}$ & 0.14 & 0.14 & --- & 0.20 \\
    \qquad BFCL & \textcolor{gray}{0.05} & 0.07 & 0.10 & 0.20 & 0.07$^{+}$ & 0.16 & 0.28 & --- & \textcolor{gray}{0.04} \\
    \qquad MiniWoB++ & \textcolor{gray}{0.03} & 0.16 & 0.11$^{+}$ & 0.31 & 0.28 & \textcolor{gray}{0.07} & 0.26 & --- & 0.16 \\
    \midrule
    \multicolumn{10}{@{}l}{\textbf{Coordinate neighbors restricted to other families}} \\[1pt]
    \multicolumn{10}{@{}l}{\quad Raw behavioral distance} \\
    \qquad ALFWorld & 0.47 & 0.56 & 0.60 & 0.61 & 0.45 & 0.36 & 0.41 & 0.54 & 0.26 \\
    \qquad TextWorld & 0.31 & 0.44 & 0.45 & 0.44 & 0.13 & 0.43 & 0.34 & 0.52 & 0.54 \\
    \qquad BFCL & \textcolor{gray}{0.08} & 0.24 & 0.15$^{+}$ & 0.24 & 0.23 & \textcolor{gray}{0.06} & 0.26 & 0.40 & 0.19 \\
    \qquad MiniWoB++ & 0.21 & 0.31 & 0.26 & 0.28 & 0.42 & \textcolor{gray}{0.02} & 0.36 & 0.35 & 0.30 \\
    \multicolumn{10}{@{}l}{\quad Residuals after removing success rate and size} \\
    \qquad ALFWorld & 0.21 & 0.08$^{+}$ & 0.31 & 0.31 & 0.16 & 0.13 & \textcolor{gray}{0.09} & --- & \textcolor{gray}{0.04} \\
    \qquad TextWorld & 0.07 & \textcolor{gray}{0.01} & 0.17 & 0.17 & $-$0.18 & 0.12 & \textcolor{gray}{0.06} & --- & 0.18 \\
    \qquad BFCL & \textcolor{gray}{$-$0.01} & \textcolor{gray}{0.02} & \textcolor{gray}{0.04} & 0.12 & 0.08 & \textcolor{gray}{$-$0.01} & \textcolor{gray}{0.10} & --- & $-$0.28 \\
    \qquad MiniWoB++ & \textcolor{gray}{0.02} & 0.08$^{+}$ & \textcolor{gray}{$-$0.06} & \textcolor{gray}{0.08} & 0.24 & \textcolor{gray}{$-$0.05} & 0.17 & --- & \textcolor{gray}{0.05} \\
    \bottomrule
  \end{tabular}
\end{table*}

\subsubsection{Reliability of the behavioral distances}

\paragraph{Are the distances reproducible across tasks?}
Table~\ref{tab:reliability} shows the correlation between distance matrices computed on two random halves of the tasks. The composition, global, and local measures reach $0.85$ or higher, so different task sets give nearly the same distances. The early-step measures, such as the first action, and the success pattern are somewhat lower, especially on BFCL, which has only 25 tasks.
\begin{table}[!htbp]
  \centering\footnotesize
  \caption{Reliability of the behavioral distances: correlation between distance matrices computed on two random halves of the tasks, averaged over 30 splits; all standard deviations are at most 0.08. The profile and conditional disagreement cannot be split by task and are omitted.}
  \label{tab:reliability}
  \setlength{\tabcolsep}{1.5pt}
  \begin{tabular}{@{}l cccc@{}}
    \toprule
    & ALFWorld & TextWorld & BFCL & MiniWoB++ \\
    \midrule
    \multicolumn{5}{@{}l}{\textbf{Reliability}\quad correlation across task halves} \\
    Type JS & 0.97 & 0.89 & 0.89 & 0.95 \\
    Edit distance & 0.91 & 0.92 & 0.89 & 0.92 \\
    2-gram & 0.88 & 0.90 & 0.87 & 0.85 \\
    First action & 0.78 & 0.82 & 0.70 & 0.91 \\
    Divergence & 0.76 & 0.90 & 0.62 & 0.91 \\
    Success pattern & 0.77 & 0.80 & 0.68 & 0.84 \\
    \bottomrule
  \end{tabular}
\end{table}

\section{Composing harnesses: details}
\label{app:harness}
This appendix supplements \S\ref{sec:harness}. Appendix~\ref{app:harnesssetup} gives the setup and Appendix~\ref{app:additivity} the full additivity table and figure.

\subsection{Harness implementations, models, and references}
\label{app:harnesssetup}
\begin{table}[!htbp]
  \centering\scriptsize
  \caption{Setup of the multi-harness experiment: 12 LLMs $\times$ 12 harnesses, scored on the benchmarks that have reference trajectories (ALFWorld and SWE-Bench use 6 of the 12 LLMs). $N$ is the number of reference trajectories and ``steps'' the number of reference steps scored by all agents.}
  \label{tab:hsetup}
  \setlength{\tabcolsep}{2.5pt}
  \begin{tabular}{@{}l l l l@{}}
    \toprule
    & Reference trajectories (source) & $N$ / steps & LLMs scored \\
    \midrule
    $\tau$-bench & HAL airline & 359 / 4{,}213 & 12 \\
    ALFWorld & own, 3 LLMs $\times$ 12 harnesses & 210 / 2{,}299 & 6 \\
    SWE-Bench & HAL SWE-agent & 100 / 3{,}916 & 6 \\
    \bottomrule
  \end{tabular}
\end{table}
\paragraph{Input construction.}The input is assembled in the order fixed part, history selected by context management, plan and reasoning for the current step, and action instruction. The fixed part consists of the system instructions, tool definitions, and initial observation (including the task instruction) and is shared by all harnesses. One block of the external history is an executed action and the resulting observation; past plans and reasoning are not included. The budget for the history part is 8{,}192 tokens, and the whole input is at most 16{,}384 tokens including the reserve for the output; when the budget is exceeded, the oldest blocks are dropped first and, if necessary, the latest observation is truncated to its head and tail.

\paragraph{Components.}
\begin{itemize}[leftmargin=*,itemsep=1pt,topsep=1pt]
  \item \textbf{Context management}: windowed passes the initial observation and the last four blocks verbatim. retrieve additionally selects, from the older history, blocks relevant to the current observation by BM25~\citep{robertson2009bm25} (weight 0.7) and recency (0.3) and inserts up to 1{,}536 tokens of them. summarize splits the older history, summarizes the parts, and merges them hierarchically into a JSON memory of at most 1{,}024 tokens that replaces the history. If the history has at most four blocks and uses at most 75\% of the budget, neither retrieval nor summarization is applied and the history is passed raw.
  \item \textbf{Planning}: replan rewrites from scratch, at every step and in at most 256 tokens, the confirmed progress, the next subgoal, and the observation that would confirm its achievement, and adds them to the input of the current step.
  \item \textbf{Acting}: think generates, for the next single action, step-by-step reasoning of at most 512 tokens that organizes the goal, constraints, assumptions, and expected effect of the action, and adds it to the input of the current step (the plan, if any, is also included in the input).
\end{itemize}
Plans and reasoning are not carried over to the next step.

\paragraph{Generation and scoring.}Generation (rollouts) and scoring (teacher forcing) use the same construction; for scoring, each harness builds its input from the same reference history $h_t$, and the raw log-likelihood of the fixed reference action is computed. The action being scored, future observations, and rewards are not used in the input.
\paragraph{LLMs.}The 12 LLMs are Qwen3-8B~\citep{yang2025qwen3}, Qwen3.8-27B~\citep{qwen2026qwen38}, Qwen3.5-27B/9B/2B~\citep{qwen2026qwen35}, Gemma4-E4B/12B/31B~\citep{gemmateam2026gemma4}, GPT-OSS-20B/120B~\citep{openai2025gptoss}, GLM-4.7-Flash~\citep{zai2026glm47flash}, and Devstral-Small-2-24B~\citep{rastogi2025devstral}, all among the 100 models of \S\ref{sec:predicting}. SWE-Bench was scored with 6 of them (Qwen3-8B, Qwen3.8-27B, Gemma4-E4B, GPT-OSS-20B/120B, GLM-4.7-Flash).
\paragraph{Reference trajectories and coordinates.}$\tau$-bench: 359 HAL~\citep{kapoor2025hal} airline traces mapped to the 30 tasks of the train split of $\tau^2$-bench~\citep{barres2025tau2}; of their 4{,}236 steps, the 4{,}213 scored by all 144 agents are used for the coordinates. SWE-Bench: 100 HAL traces of SWE-agent~\citep{yang2024sweagent}, 3{,}916 steps. ALFWorld: 210 train rollouts of our own 3 LLMs $\times$ 12 harnesses, of which the 2{,}299 steps scored by all 72 agents (6 LLMs $\times$ 12 harnesses) are used. All coordinates are the centered step version of \S\ref{sec:vectors}.
\paragraph{Change in the mean log-likelihood.}The component removed from the coordinates by centering, $\bar\ell(m,A)-\bar\ell(m,\base)$, is $+9.6$ for retrieve, $+3.5$ for summarize, $-4.2$ for replan, and $-6.2$ for think on $\tau$-bench, in nats per step (medians over LLMs), and its sign is shared by at least 11 of the 12 LLMs. Context management makes the reference actions more natural, whereas planning and reasoning move the agent away from them.

\begin{table*}[!t]
  \centering\scriptsize
  \caption{Full additivity table (medians over LLMs: cosine, norm ratio, relative reconstruction error). $\tau$-bench uses 12 LLMs; ALFWorld and SWE-Bench use 6 of the 12 LLMs (ALFWorld uses only the 2{,}299 reference steps scored by all 72 agents). The pooled rows take the median over all LLM--harness cases: the five two-component harnesses (60 cases on $\tau$-bench, 30 on ALFWorld and SWE-Bench) and the two three-component harnesses (24 / 12 / 12 cases).}
  \label{tab:additivity_full}
  \setlength{\tabcolsep}{4pt}
  \begin{tabular}{@{}l ccc ccc ccc@{}}
    \toprule
    & \multicolumn{3}{c}{$\tau$-bench} & \multicolumn{3}{c}{ALFWorld} & \multicolumn{3}{c}{SWE-Bench} \\
    \cmidrule(lr){2-4}\cmidrule(lr){5-7}\cmidrule(lr){8-10}
    & cos & Norm ratio & Rel. error & cos & Norm ratio & Rel. error & cos & Norm ratio & Rel. error \\
    \midrule
    retrieve + replan & 0.97 & 0.99 & 0.25 & 0.62 & 0.95 & 0.94 & 0.97 & 0.97 & 0.25 \\
    retrieve + think & 0.94 & 1.00 & 0.34 & 0.61 & 0.95 & 0.95 & 0.96 & 0.96 & 0.28 \\
    summarize + replan & 0.91 & 0.96 & 0.45 & 0.60 & 0.89 & 1.00 & 0.78 & 0.89 & 0.72 \\
    summarize + think & 0.87 & 0.98 & 0.51 & 0.63 & 0.91 & 0.93 & 0.77 & 0.86 & 0.76 \\
    replan + think & 0.81 & 0.74 & 0.77 & 0.70 & 0.73 & 1.03 & 0.74 & 0.73 & 0.95 \\
    \midrule
    Two components, pooled & 0.90 & 0.98 & 0.46 & 0.63 & 0.89 & 0.97 & 0.81 & 0.90 & 0.65 \\
    Three components, pooled & 0.82 & 0.87 & 0.68 & 0.52 & 0.72 & 1.25 & 0.77 & 0.76 & 0.84 \\
    \bottomrule
  \end{tabular}
\end{table*}

\begin{figure*}[!t]
  \centering
  \includegraphics[width=\textwidth]{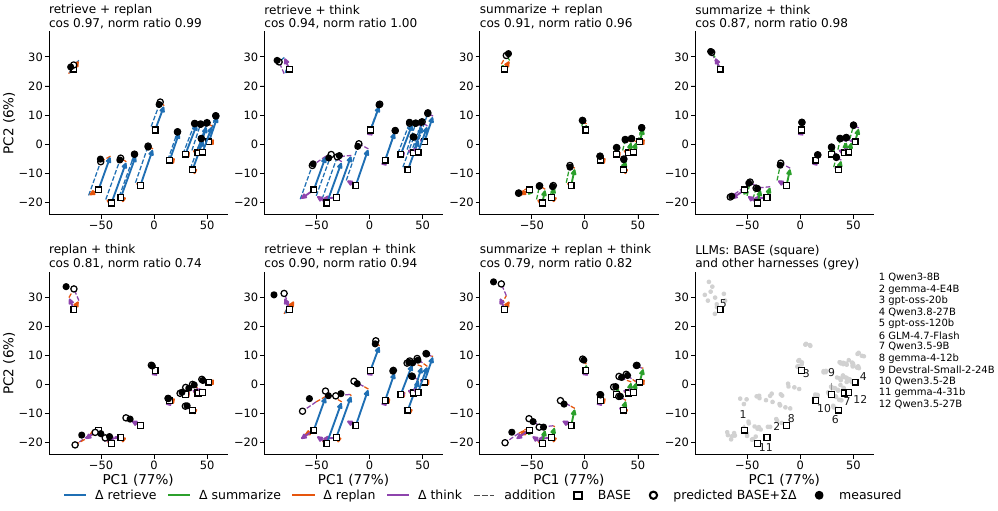}
  \caption{Additivity for all seven compositions on $\tau$-bench (12 LLMs), in the PC1--PC2 projection of Figure~\ref{fig:additivity}; axes are scaled independently. The last panel identifies the LLMs by their BASE positions (gray: the other harnesses). Two-component compositions are drawn as parallelograms, three-component ones as chains $\mathrm{BASE}+\Delta_A+\Delta_B+\Delta_C$. Titles give the cosine and norm ratio in the full coordinate space (medians over LLMs).}
  \label{fig:additivity_all}
\end{figure*}
\subsection{Additivity: full tables}
\label{app:additivity}
Table~\ref{tab:additivity_full} and Figure~\ref{fig:additivity_all} give the full results. On ALFWorld and SWE-Bench (6 LLMs) the direction of the composed displacement also agrees with the prediction, but more weakly than on $\tau$-bench (median cosines over all LLMs $\times$ 5 two-component compositions of $0.63$ and $0.81$), and on ALFWorld the relative reconstruction error reaches about 1. Each column is a median of per-LLM values, so the columns do not satisfy the identity below exactly. The ALFWorld reference trajectories are rollouts of the agents under study themselves, and only 2{,}299 of their steps are scored by all 72 agents.
\paragraph{Identity between norm ratio and cosine.}With $v=z(A{+}B)-z(\base)$, $\hat v=\Delta_A+\Delta_B$, $e=\hat v-v$, norm ratio $r=\norm{v}/\norm{\hat v}$, and $c=\cos(\hat v,v)$, the relative reconstruction error is $\norm e^2/\norm v^2=1+r^{-2}-2c/r$. Even when the norms agree, an error remains whenever $c<1$.

\end{document}